\newcommand*{\colorboxed}{}
\def\colorboxed#1#{
	\colorboxedAux{#1}
}
\newcommand*{\colorboxedAux}[3]{
	\begingroup
	\colorlet{cb@saved}{.}
	\color#1{#2}
	\boxed{
		\color{cb@saved}
		#3
	}
	\endgroup
}
\definecolor{DarkBlue}{RGB}{22,54,93}
\theoremstyle{plain}
\newtheorem{theorem}{Theorem}[section]
\newtheorem{lemma}[theorem]{Lemma}
\theoremstyle{definition}
\newtheorem{definition}[theorem]{Definition}
\newtheorem{assumption}[theorem]{Assumption}
\theoremstyle{remark}
\theoremstyle{plain}
\theoremstyle{definition}
\theoremstyle{remark}
\newcommand{\argmax}{\mathop{\mathrm{argmax}}}
\newcommand{\Pb}{\mathbb{P}}
\newcommand{\Rb}{\mathbb{R}}
\newcommand{\Eb}{\mathbb{E}}
\newcommand{\Ac}{\mathcal{A}}
\newcommand{\Dc}{\mathcal{D}}
\newcommand{\Ec}{\mathcal{E}}
\newcommand{\Fc}{\mathcal{F}}
\newcommand{\Gc}{\mathcal{G}}
\newcommand{\Sc}{\mathcal{S}}
\newcommand{\Hc}{\mathcal{H}}
\newcommand{\Lc}{\mathcal{L}}
\newcommand{\Tc}{\mathcal{T}}
\newcommand{\Nc}{\mathcal{N}}
\newcommand{\Vc}{\mathcal{V}}
\newcommand{\norm}[1]{\left\lVert#1\right\rVert}
\date{}
\author
{
        Songtao Feng\thanks{\small Department of Electrical and Computer Engineering, University of Florida, FL 32603, USA; e-mail: {\tt  sfeng1@ufl.edu}}
        ,~~~Jie Fu\thanks{\small Department of Electrical and Computer Engineering, University of Florida, FL 32603, USA; e-mail: {\tt  fujie@ufl.edu}}
}
 \definecolor{DarkBlue}{RGB}{22,54,93}
 \theoremstyle{plain}
\crefname{assumption}{assumption}{assumptions}
 \newcommand{\URM}[1]{\left(\mathrm{\uppercase\expandafter{\romannumeral#1}}\right)}
\newcommand{\uset}[3][0ex]{
  \mathrel{\mathop{#3}\limits_{
    \vbox to#1{\kern -5\ex@
    \hbox{$#2$}\vss}}}}
\title{Thompson Sampling in Online RLHF\\ with General Function Approximation}
\begin{document}
\maketitle
\begin{abstract}
Reinforcement learning from human feedback (RLHF) has achieved great empirical success in aligning large language models (LLMs) with human preference, and it is of great importance to study the statistical efficiency of RLHF algorithms from a theoretical perspective. In this work, we consider the online RLHF setting where the preference data is revealed during the learning process and study action value function approximation. We design a model-free posterior sampling algorithm for online RLHF inspired by Thompson sampling and provide its theoretical guarantee. Specifically, we adopt Bellman eluder (BE) dimension as the complexity measure of the function class and establish $O(\sqrt{T})$ regret bound for the proposed algorithm with other multiplicative factor depending on the horizon, BE dimension and the $log$-bracketing number of the function class. Further, in the analysis, we first establish the concentration-type inequality of the squared Bellman error bound based on the maximum likelihood estimator (MLE) generalization bound, which plays the crucial rules in obtaining the eluder-type regret bound and may be of independent interest.
\end{abstract}

\section{Introduction}
Reinforcement Learning with Human Feedback (RLHF, or preference-based learning) has achieved great success empirically in the fields of robotics~\cite{Jain:RLHF:2013}, games~\cite{MacGlashan:RLHF:2017,Christinano:RLHF:2017} and natural language processing~\cite{Stiennon:RLHF:2020,Ouyang:RLHF:2022}. Unlike standard reinforcement learning (RL) where the learner learns to maximize the reward, RLHF requires the learner to learn from preference feedback information in the form of trajectory-based comparison~\cite{Christinano:RLHF:2017}. The rationale is that assigning reward function for each state is challenging in many tasks, and acquiring preference data is more realistic in many tasks~\cite{Wirth:RLHF:2017}.
Despite the acclaimed effectiveness, the implementation of RLHF is far from satisfactory and often involves practical issues such as extensive parameter tuning, and preference data collection. In consequence, the resulting RLHF models typically suffer from performance degeneration issue if data collection and preference modeling are not accurate~\cite{Gao:RLHF:2023,Casper:RLHF:2023}. Therefore, it is important to understand the fundamentals of RLHF theory, which may guide future RLHF algorithmic design.

Preference-based learning has attracted growing attention recently after the tremendous empirical success in ChatGPT, and can be traced back to the study in the field of dueling bandits~\cite{Yue:bandit:2012,Saha:bandit:2021,Bengs:Bandit:2021}. A typical procedure in preference-based learning consists of training a reward model based on the pre-collected offline preference-based data, and learning a policy to optimize the learned reward model. While the aforementioned two steps are often completely separated in practice, the model learning and policy searching are sometimes accomplished at the same time especially in RLHF theory. While offline RLHF needs coverage assumption on the offline preference-based data to ensure the optimal policy can be inferred, online RLHF learn a policy by active exploration and does not need such assumption. Both offline and online RLHF has been actively investigated in RLHF theory, and in this work we focus on the online RLHF.


In this work, we aim to design new statistically efficient posterior algorithms for online RLHF with general function approximation. In particular, we seek to initiate the study of model-free Thompson sampling algorithms in RLHF and open up new problems along this direction. 
We highlight the main contributions of this work below. 

First, we focus on the \emph{online} RLHF with \emph{general function approximation}. Most existing works study the offline setting where the preference comparison data is given in advance. Beyond pure offline setting, it is also common to query comparison data during the training process. Further, we adopt the concept of Bellman eluder (BE) dimension to characterize the complexity of the function class, which is remarkably rich to cover a vast majority of tractable RL problems.

Second, we adopt \emph{posterior sampling} approach inspired by Thompson sampling (TS) under the model-free learning. It is generally hard to design computationally efficient algorithm under the general function approximation setting. However, TS algorithms are considered to be more tractable compared to UCB-based and confidence-set based algorithms in the general function approximation setting. Our work first investigates the general function approximation in posterior sampling based algorithm in RLHF setting, and we extend the definition of the general function class as well as the assumptions on realizability and completeness. These generalizations are crucial to facilitate the analysis in posterior sampling.

Third, we provide $\mathcal{O}(\sqrt{T})$ regret bound of the Thompson sampling algorithm for online RLHF with other multiplicative factors depending on the Bellman eluder dimension and the log-bracketing number of the function class. Towards the regret analysis, we establish new concentration-type inequalities with respect to the squared Bellman error based on the maximum likelihood estimator (MLE) generalization bound, which is essential for bounding the Bellman error.

Broadly speaking, our work first studies the Thompson sampling in online RLHF with action value function approximation and identifies the eluder-type regret bound. Besides the theoretical contribution, our result also provides practical insights. For example, we show that trajectories are not necessarily generated from the same policy, instead using two trajectories drawing from new and older policies actually balances exploration and exploitation. We show that the posterior sampling method based on the MLE is in some sense equivalent to confidence-set based method in that both provide the same regret guarantee using Bellman eluder dimension.

\subsection{Related works}
\if{0}
{\bf RLHF algorithms.} Proximal Policy Optimization (PPO)~\cite{Schulman:PPO:2017} is the most popular algorithm in large language models (LLMs). However, PPO suffers from instability, inefficiency, and high sensitivity to both hyperparameters~\cite{Choshen:PPO:2019} and code-level optimizations~\cite{Engstrom:PPO:2020}, which make it difficult to achieve the optimal performance in Chat-GPT4~\cite{ChatGPT4-report} and replicate its performance. Further, it requires the integration of additional components including a reward model, a value network (critic), and a reference model, potentially as large as the aligned LLM~\cite{Ouyang:ChatGPT:2022,Touvron:2023}. To resolve these aforementioned limitations, researchers have explored alternative strategies for LLM alignment. One approach is the reward-ranked finetuning (RAFT) that iteratively finetunes the model on the best outputs from a set of generated responses to maximize reward~\cite{dong2023raft,2023arXiv230405302Y,Touvron:2023,2023arXiv230808998G}. Another line of research builds upon the KL-regularized formulation~\cite{2023arXiv230518290R,2023arXiv230602231Z,2023arXiv230916240W,2023arXiv230906657L,2023arXiv231010505L}. Notably, Direct Preference Optimization (DPO)~\cite{2023arXiv230518290R} has been widely adopted as a promising alternative to PPO with notable stability and competitive performance. Similar to DPO, some other works also aims to optimize the LLM directly from the preference data~\cite{2023arXiv230510425Z,2023arXiv231012036G} but the theoretical analysis is still lacking.
\fi

{\bf Dueling bandits with preference feedback.} Dueling bandits are simplified settings within the RLHF framework where the learner proposes two actions and only provide preference feedback. Numerous works have studied the dueling bandits~\cite{Yue:bandit:2012,Saha:bandit:2021,Bengs:Bandit:2021} and contextual dueling bandits~\cite{Dudik:bandit:2015,Saha:bandit:2022,Yue:bandit:2024}.

{\bf RLHF theory.}  RLHF has been first investigated in the tabular setting~\cite{Yichong:RLHF:2020,Ellen:RLHF:2019,Saha:RLHF:2023}. Beyond tabular setting, \cite{Xiaoyu:RLHF:2022} proposed the first confidence-set based algorithm for the online RLHF with general function approximation. Later \cite{Yuanhao:RLHF:2023} proposed a framework that transforms exiting sample-efficient RL algorithms to efficient online RLHF algorithms. Additionally, \cite{Wenhao:RLHF:2023,Zihao:RLHF:2023} investigated general function approximation for offline RLHF. Additionally, \cite{Xiong:RLHF:2023} introduced hybrid RLHF. The most relevant work~\cite{Runzhe:RLHF:2023} developed algorithms for online RLHF for both linear function approximation and nonlinear function approximation. Despite both their work and ours leverage the TS idea, they propose a model-based algorithm while our algorithm is model-free. Further, the nonlinear function approximation in their work and the general function approximation proposed in our work is fundamentally different.

{\bf Thompson sampling (TS).} TS was first proposed in \cite{Thompson:TS:1933} for stochastic bandit problems. Thanks to the strong empirical performance~\cite{Osband:TS:2018,IshFaq:TS:2023}, TS has been investigated in a wide range of theoretical learning problems including bandit problems~\cite{Agrawal:TS:2012,Agrawal:TS:2017,Agrawal:TS:2013,SiweiWang:TS:2018} and RL theory~\cite{Russo:TS:2019,Zanette:TS:2019,IshFaq:TS:2021,ZhihanXiong:TS:2022,Wenhao:RLHF:2023,Zihao:RLHF:2023}.

\section{Preliminary}
{\bf Notation:} For two integers $M$ and $N$ where $M\leq N$, we denote $[M:N]=\{M,M+1,\ldots,N\}$ and $[N]=\{1,2,\ldots,N\}$. For two real numbers $a$ and $b$ where $a\leq b$, we denote $[a,b]=\{x\mid a\leq x\leq b\}$. For a set $\mathcal{S}$, we use $\Delta(\mathcal{S})$ to denote the set of distributions over $\mathcal{S}$.

{\bf Markov decision processes (MDP).} Consider an episodic Markov decision process denoted by $\mathcal{M}=(H,\mathcal{S},\mathcal{A},r,P,s_1)$, where horizon $H$ is the length of the episode, $\mathcal{S}$ is the state space, $\mathcal{A}$ is the action space, $r=\{r_h\}_{h=1}^H$ and $P=\{P_h\}_{h=1}^{H-1}$ are the collection of reward functions and transition kernels, and $s_1$ is a fixed initial state. Here $r_h:\mathcal{S}\times\mathcal{A}\to [0,1]$ is the $h$-th step reward function, and $P_h:\mathcal{S}\times\mathcal{A}\to\Delta(\mathcal{S})$ is the $h$-th step transition kernel. We assume each episode ends in a fixed final state $s_{H+1}$, and $P_{H}(s_{H+1}|s,a)=1$ for all $(s,a)\in\mathcal{S}\times\mathcal{A}$.

The sequential interaction between the learner and the environment  is captured by the MDP $\mathcal{M}$. The interaction proceeds in $H$ rounds. At each step $h\in[H]$, the learner observes   state $s_h\in\mathcal{S}$, takes   action $a_h\in\mathcal{A}$, and the MDP evolves into the next state $s_{h+1}\sim P_h(\cdot|s_h,a_h)$. The episode ends when the final state $s_{H+1}$ is reached. While in standard RL the learner has access to the per-step reward signal, in RLHF the preference feedback is revealed instead. We will elaborate it later. 

A policy $\pi=\{\pi_h\}_{h=1}^H$ is characterized by $\pi_h:\mathcal{S}\to\Delta(\mathcal{A})$, which specifies the probability of selecting actions at each step $h\in[H]$. Given a policy $\pi$, the value function at step $h$ is defined as $V_h^\pi(s)=\mathbb{E}_{\tau_{h:H}\sim\pi}[\sum_{h'=h}^Hr(s_{h'},a_{h'})|s_h=s]$, where we use $\tau_{h:H}$ to denote the trajectory $\tau_{h:H}=(s_h,a_h,\ldots,s_{H+1})$ induced by policy $\pi$ starting from certain state $s_h$. The action value function $Q_h^\pi(s,a)$ is defined similarly. The optimal policy $\pi^*$ is the policy that maximizes the expected reward $\pi^*=\argmax_\pi V_1^\pi(s_1)$. For simplicity, we use $V^*$ to denote $V^{\pi^*}$ and $Q^*$ to denote $Q^{\pi^*}$ respectively.

{\bf Reinforcement learning with human feedback (RLHF).} In the mathematical framework of online RLHF, the learner needs to select two action sequences at the beginning of episodes $t$, and generate two trajectories $\tau_t^0$ and $\tau_t^1$, where $\tau_t^i=(s_{t,1}^i,a_{t,1}^i,\ldots,s_{t,H+1}^i)$ for $i\in\{0,1\}$. Then a preference feedback $o_t\in\{0,1\}$ between $\tau_t^0$ and $\tau_t^1$ is revealed. Here $o_t=0$ indicates $\tau_t^0$ is preferred over $\tau_t^1$, and similarly $o_t=1$ indicates $\tau_t^1$ is preferred over $\tau^0$. Often the policies for generating trajectories $\tau_t^0$ and $\tau_t^1$ are selected at the beginning of the each episode, and we define $\pi_t^0$ and $\pi_t^1$ as the policies that generate trajectories $\tau_t^0$ and $\tau_t^1$ respectively. We adopt the following preference model. 


\begin{definition}[Preference model \cite{Wenhao_Zhan:RLHF:2024}]\label{defn:pref-model}
Given a pair of trajectories $(\tau^0,\tau^1)$, the preference feedback $o\in\{0,1\}$ is sampled from the Bernoulli distribution
\begin{align*}
\Pb(o=0|\tau^0,\tau^1)&=\Pb(\mbox{$\tau^0$ is preferred to $\tau^1$}|r) =\Phi(r(\tau^0)-r(\tau^1)),
\end{align*}
where $r(\tau^i)=\sum_{h=1}^H=r(s_{h}^i,a_h^i)$ for $i\in\{0,1\}$ is the trajectory reward, and $\Phi:\Rb\to[0,1]$ is a monotonically increasing link function.
\end{definition}

\begin{assumption}[\cite{Runzhe:RLHF:2023}]\label{assump:link-fcn}
Assume link function $\Phi$ is differentiable and there exists constants $\kappa , \bar{\kappa}$ such that $\kappa^{-1}\leq \Phi'(x)\leq \bar{\kappa}^{-1}$ for any $x\in[-H,H]$.
\end{assumption}

Constant $\kappa$ and $\bar{\kappa}$ characterizes how difficulty of estimating the reward signal from preference feedback. It is worth noting that $\kappa$ and $\bar{\kappa}$ play different roles as our theoretical result shows that the bound depends polynomially on $\kappa$ but logarithmically on $\bar{\kappa}$.

When we choose sigmoid function $\Phi(x)=1/(1+\exp(-x))$ as the link function, the preference model becomes the well-known Bradley-Terry-Luce (BTL) model~\cite{Bradley:BTL:1952}. In this case, $\kappa=\bar{\kappa}=2$.

Under the online RLHF framework, we aim to minimize the following regret 
\begin{align*}
\mathrm{Regret}(T)=\sum_{t=1}^T\left(2V_1^*(s_1)-V_1^{\pi_t^0}(s_1)-V_1^{\pi_t^1}(s_1)\right),
\end{align*}
which essentially quantifies performance gap between optimal policy $\pi^*$ and agent-chosen policy $\pi_t^i$. Besides the regret, we also consider the sample complexity. Specifically, we aim to design algorithms for online RLHF with general function approximation that can find $\epsilon$-optimal policy efficiently. 


\subsection{Function approximation}
We are interested in the online RLHF with general function approximation. There are different ways to introduce general function approximation, and we consider action-value function approximation. 

Given a function class $\Fc=\Fc_1\times\cdots\times\Fc_{H}$, where $\Fc_h\subseteq(\Sc\times\Ac\to[0,H-h+1])$ gives a set of candidate functions to approximate optimal action-value function $Q_h^*$. To ensure efficient algorithms for RL with general function approximation, The following assumptions are commonly adopted in literature~\cite{Chi_Jin:RL_eluder:2021}.

\begin{assumption}[Realizability]\label{assump:realizability}
$Q_h^*\in\Fc_h$ for all $h\in[H]$.
\end{assumption}
Realizability assumption asserts that function class $\Fc$ is rich enough to include the optimal $Q^*$.

The Bellman operator $\Tc_h$ at step $h$ is defined by the well-known Bellman equation
\begin{align*}
Q_h^*(s,a)&=(\Tc_h Q_{h+1}^*)(s,a) =r_{h}(s,a)+\Eb_{s'\sim P_h(\cdot|s,a)}[\max_{a'}Q_{h+1}^*(s',a')],
\end{align*}
for all valid $(s,a,h)$. Define $\Tc_h\Fc_{h+1}=\{\Tc_h f_{h+1}:f_{h+1}\in\Fc_{h+1}\}$.
\begin{assumption}[Completeness]\label{assump:completeness}
$\Tc_h\Fc_{h+1}\subseteq \Fc_h$.
\end{assumption}
Completeness assumption requires that the function class is closed under the Bellman operator.

When function class $\Fc$ has finite elements, the cardinality $|\Fc|$ naturally serves as the ``size" measure of function class $\Fc$. When addressing general function approximation with infinitely many elements, we need other notions. In this work, we use the notion of bracketing number~\cite{Geer2000EmpiricalPI}.

\begin{definition}[Bracketing number]
Consider a function class $\Fc\subseteq \mathcal{X}\to \Rb$. Given two functions $l,u:\mathcal{X}\to \Rb$, the bracket $[l,u]$ is defined as the set of functions $f\in\Fc$ with $l(x)\leq f(x)\leq u(x)$ for all $x\in\mathcal{X}$. It is called an $\omega$-bracket if $\norm{l-u}\leq \omega$. The bracketing number of $\Fc$ with respect to the metric $\norm{\cdot}$, denoted by $\Nc_{[]}(\omega,\Fc,\norm{\cdot})$, is the minimum number of $\omega$-brackets needed to cover $\Fc$.
\end{definition}
Bracketing number has shown to be small in many practical scenarios~\cite{Geer2000EmpiricalPI}. When $\Fc$ is finite, the bracketing number is simply bounded by its size. When $\Fc$ is a $d$-dimensional linear function class, the logarithm of its bracketing number is bounded by $d$ up to logorithmic factors. Further, the well-known covering number $\Nc(\epsilon,\Fc,\norm{\cdot})$ is upper bounded by the bracketing number $\Nc_{[]}(2\epsilon,\Fc,\norm{\cdot})$ for any $\epsilon>0$. In the sequel, we sometimes write $\Nc_{[]}(\omega,\Fc,\norm{\cdot})$ as $\Nc_{[]}(\Fc)$ for simplicity.

When invoking the bracketing number to our RLHF with general function approximation scenario, we apply the bracketing number to each function class $\Fc_h$ for $h\in[H]$ and write $\Nc_{[]}(\epsilon,\Fc,\norm{\cdot}):=\sup_{h\in[H]}\Nc_{[]}(\epsilon,\Fc_h,\norm{\cdot})$ for notational convenience.

\subsection{Bellman eluder dimension}
The notion of Bellman eluder (BE) dimension was first proposed in~\cite{Chi_Jin:RL_eluder:2021} to capture rich classes of RL problems. For completeness, we provide formal definitions below.

We start with the distributional eluder dimension.
\begin{definition}[Distributional eluder Dimension]\label{def:DE-dim}
Let $\epsilon \geq 0$ and $\{\nu_i\}_{i=1}^n\subseteq \Delta(\Sc\times\Ac)$ be a sequence of probability distributions.
\begin{itemize}
\item A distribution  $\mu\in\Delta(\Sc\times\Ac)$ is $\epsilon$-dependent on $\{\nu_1,\ldots,\nu_n\}$ with respect to $\Fc$ if any $f\in\Fc$ satisfying $\sqrt{\sum_i (\Eb_{\nu_i}f)^2}\leq \epsilon$ also satisfies $|\Eb_{\mu}f|\leq  \epsilon$.
\item A distribution $\mu$ is $\epsilon$-independent on $\{\nu_1,\ldots,\nu_n\}$ with respect to $\Fc$ if $\mu$ is not $\epsilon$-dependent on $\{\nu_1,\ldots,\nu_n\}$.
\item The distributional eluder dimension $\dim_E(\Fc,\Pi,\epsilon)$ of a function class $\Fc$ and distribution class $\Pi$ is the length of the longest sequence of elements in $\Pi$ such that, for some $\epsilon'\geq \epsilon$, every element is $\epsilon'$-independent of its predecessors.
\end{itemize}
\end{definition}

When we consider the function class of Bellman residuals, the distributional eluder dimension becomes Bellman eluder dimension when maximizing over all steps.
\begin{definition}[Bellman eluder (BE) Dimension]\label{def:BE-dim}
Let $\Fc_h-\Tc_h\Fc_{h+1}=\{f_h-\Tc_hf_{h+1}:f\in\Fc\}$ be the set of Bellman residuals induced by $\Fc$ at step $h$, and $\Pi=\{\Pi_h\}_{h=1}^H$ be a collection of $H$ probability measure families over $\Sc\times\Ac$. The $\epsilon$-Bellman eluder of $\Fc$ with respecti $\Pi$ is defined as
\begin{align*}
\dim_{\mathrm{BE}}(\Fc,\Pi,\epsilon)=\sup_{h\in[H]}\dim_{\mathrm{BE}}(\Fc_h-\Tc_h\Fc_{h+1},\Pi_h,\epsilon).
\end{align*}
\end{definition}

Besides the function class $\Fc$ and error $\epsilon$, the Bellman eluder dimension also depends on the distribution family $\Pi$. In this work, we consider the following two choices.
\begin{itemize}
\item $\Pi_\Fc=\{\Pi_{\Fc,h}\}_{h\in[H]}$: $\Pi_{\Fc,h}$ denotes the collection of all probability measures over $\Sc\times\Ac$ at step $h$ that are generated by executing the greedy policy $\pi_f$.
\item $\Pi_\Delta=\{\Pi_{\Delta,h}\}_{h\in[H]}$: $\Pi_{\Delta,h}=\{\delta_{(s,a)}(\cdot):s\in\Sc,a\in\Ac\}$, the collections of probability measures that put measure 1 on a single state-action pair.
\end{itemize}

\section{Posterior sampling}
In this section, we first derive the posterior sampling expression, and then generalizing the function approximation and related assumptions in posterior sampling regime.

In posterior sampling, the underlying model is not necessarily a fixed model. Instead, it could be sampled from certain distribution. Suppose $f^*\in\Fc$ is the underlying model following distribution $p(f^*)$, then we write action-value function as $Q_h^*(s,a):=Q_{h;f^*}(s,a)=\Eb_{f\sim p(f^*)}[Q_{h;f}(s,a)]$ for all valid $h,s,a$. In the analysis, we use simplified notation $\Eb_{f^*}[g(f^*)]:=\Eb_{f\sim p(f^*)}[g(f)]$ for any function $g$. The value function is defined similarly. 

We first reformulate the regret goal in Bayesian RL framework. In the Bayesian setting, $f^*$ is sampled from some known prior distribution $p_0$, and the goal is minimize the Bayesian regret
\begin{align*}
\mathrm{Regret}(T)=\Eb\left[\sum_{t=1}^T\left(2V_1^*(s_1)-V_1^{\pi_t^0}(s_1)-V_1^{\pi_t^1}(s_1)\right)\right],
\end{align*}
where the expectation is taken with respect to the prior distribution over $f^*$. 


We assume that the prior over $\Fc$ has the form $p_0$. Let $\Dc^t=\{(\tau_0^s,\tau_1^s,o^s)\}_{s=1}^t$ be the comparison data observed in $t$ episodes. Then the likelihood of $\Dc^t$ given a value function $f\in\Fc$ is defined as
\begin{align*}
p(\Dc^t|f)=&\prod_{s=1}^t\Big(o^s\Phi(r(\tau_1^s)-r(\tau_0^s)) +(1-o^s)\Phi(r(\tau_0^s)-r(\tau_1^s))\Big).
\end{align*}
Here $r(\tau)=\sum_{h=1}^H (f_h-P_hf_{h+1})(s_h,a_h)$ where $\tau=(s_1,a_1,\ldots,s_{h+1})$ is the trajectory and $P_h$ is the underlying transition kernel. In the main context, we assume $P_h$ is available for simplicity. In practice, we can form the estimator at round $t$ as $\widehat{P}_h^t(s'|s,a)=\frac{\sum_{\tau\in\Dc}\mathbf{1}\{(s,a,s')\in\tau\}}{\max\{\sum_{\tau\in\Dc}\mathbf{1}\{(s,a)\in\tau\},1\}}$, where $\Dc$ consists of all trajectories up to round $t$, $(s,a,s')\in\tau$ means trajectory in $\tau=(s_1,a_1,\ldots,s_{H+1})$, $(s_h,a_h,s_{h+1})=(s,a,s')$ and $(s,a)\in\tau$ is defined similarly. We remark that $\norm{(\widehat{P}_h^t-P_h)f_h}_\infty<\widetilde{\mathcal{O}}(\sqrt{\frac{1}{t}})$ with other logarithmic multiplicative factor scaling in $\log H\cdot\log\Nc_{[]}(\Fc)$. As a result, replacing $P_h$ by $\widehat{P}_h$ in the computation of $r(\tau)$ does not incur any leading term in the regret bound.

Combining the prior and likelihood above, we obtain the posterior after $t$ episodes
\begin{align}
p(f|\Dc^{t})\propto p_0(f) &\prod_{s=1}^t\Big(o^s\Phi(r(\tau_1^s)-r(\tau_0^s)) +(1-o^s)\Phi(r(\tau_0^s)-r(\tau_1^s))\Big), \label{eqn:post}
\end{align}
which ignore factors $1/p(\Dc^t)$ as it is a constant under the underlying MDP.

\subsection{Function class completion}
Different from the UCB-based or the confidence-set based approach, posterior sampling may require generalizing the definition function approximation and the assumptions on realizability and completeness.

In standard RL problems, the underlying model $f^*$ is fixed  while the underlying model could draw from certain distribution. The function class completion is needed for posterior sampling method to ensure the posterior sampled model following certain distribution (no longer fixed) still lies in the generalized function class. 

We begin with generalizing the function class $\Fc$ in posterior sampling scenario. The necessity lies in that we expect the generalized function class is complete under posterior sampling procedure.

\begin{definition}[Generalized function approximation class]\label{defn:gen-fcn-approx}
Given a function class $\Fc$ and prior set $\Pi$ consisting of all possible prior distribution. we complete the function class, denoted as $\Gc$, so that any possible posterior lies in $\Gc$.
\end{definition}
Particularly, we have that any singleton $f\in\Fc\subseteq \Gc$, and any prior $\Pi\subseteq \Gc$. This step is to ensure that generalized class is complete under the posterior sampling procedure, therefore $\Gc$ is rich enough to include all possible functions (possibly stochastic) during any algorithm runs.

we define generalized realizability for posterior sampling below.
\begin{assumption}[Generalized realizability]\label{assump:gen-realizability}
Let $f^*_h$ be the underlying function class of step $h$, possibly stochastic, which follows distribution $p(f^*)$. Then, $p(f^*_h)\subseteq \Delta(\Fc_h)$ for all $h\in[H]$.
\end{assumption}
The generalized realizability assumption in posterior sampling essentially requires that for any possible model $f_h\in p(f^*)$ (with positive probability), $f_h\in\Fc_h$. We comment that this assumption could be replaced by that $Q_h^*\in\Gc_h$ for all $h\in[H]$.

In posterior sampling, the Bellman operator $\Tc_h$ at step $h$ is defined similarly by the Bellman equation $Q_h^*(s,a)=(\Tc_hQ_{h+1}^*)(s,a)$, where the difference lies in that $Q_h^*(s,a)$ is averaging over $p(f^*)$. 

\begin{assumption}[Generalized completeness]\label{assump:gen-completeness}
$\Tc_h\Gc_{h+1}\subseteq \Gc_h$.
\end{assumption}
Our generalized completeness assumption is essentially the same as standard completeness assumption except that generalized function approximation $\Gc_h$ is considered.

Next, we show upper bounds for the BE dimension of the completed linear model class and the function class with finite number of elements. 

In the linear model where the action value function $f_h\in\Fc_h$ has form $f_h(s,a)=\langle\phi(s,a),\theta\rangle$ with a known feature $\phi(\cdot,\cdot)$ of dimension $\Tilde{d}$, the completed model class $\Gc_\Fc$ is still linear. Therefore the BE dimension for linear function class is bounded by $\widetilde{\mathcal{O}}(\Tilde{d})$ up to some logarithmic factor.

For the finite function class with cardinality $|\Fc|$, assume the prior distribution is uniform (the prior we adopt in the algorithm) and our algorithm ends within $T$ rounds. Its covering number is upper bounded by $T\log 2$ by considering $2^T$ possible outcomes of the dataset. Hence in general the finite function class may end up with linear regret (proportional to the log-covering number). In practice, the target function $f$ is always approximated with certain precision, and we may further discretize the continuous function space based on the precision. 

We leave as our future work whether known tractable classes after completion are still tractable. It is also worth exploring the case when generalized completeness is violated, and we leave this as our future work.

\section{Model-free Thompson sampling algorithm}
In this section, we propose our model-free posterior sampling algorithm for online RLHF with general function approximation. The posterior sampling is inspired by the well-known TS method, which is often more tractable than the UCB-based and the confidence-set based algorithms. 


\subsection{Algorithm}

{\bf Planning oracle.} $\arg\max_\pi f$ is a planning oracle which gives the greedy policy with respect to $f$ for any function $f\in\Gc$. The planning oracle is standard in the general function approximation scenario. %

\begin{algorithm}[ht]
\begin{algorithmic}[1]
\caption{Model-free Thompson sampling for online RLHF with general function approximation}\label{alg:main}
\STATE{Initialization: prior distribution $p_0$, uniform policy $\pi_0^0$, dataset $\mathcal{D}^0=\varnothing$.}
\FOR{$t\leftarrow 1,\ldots,T$}
\STATE{Draw $Q$ function $f^t\sim p(\cdot|S_{t-1})$ according to (\ref{eqn:post}).}
\STATE{Compute greedy policy $\pi_t^0=\arg\max_{\pi}f$, and set $\pi^t_1=\pi^{t-1}_0$.}
\STATE{Employ $\pi^t_0,\pi^t_1$ to obtain a trajectory comparison $(\tau^t_0,\tau^t_1,o^t)$, and set $\mathcal{D}^t=\mathcal{D}^{t-1}\cup (\tau^t_0,\tau^t_1,o^t)$.}
\ENDFOR
\end{algorithmic}
\end{algorithm}

{\bf Algorithm description.} Our algorithm is presented in Algorithm~\ref{alg:main}. Our proposed algorithm proceeds in $T$ rounds. At the beginning of episode $t$, we first compute the posterior distribution on $f^t=(f_1^t,\ldots,f_H^t)\in\Fc$ by (\ref{eqn:post}) and sample a $f^t$ based on the posterior. Then, we use a standard planning oracle $\arg\max_\pi f$ to obtain the greedy policy $\pi^t_1$. The comparator policy $\pi^t_1$ is simply set to be the policy $\pi^t_0$ obtained from the previous episode. Employing the two policies gives a new trajectory comparison data, and we use such data to update the dataset $\Dc^t$ at the end of the round.

{\bf Computation.} The bottleneck of our algorithm lies in the computation of posterior distribution (Line 3). Existing works showed that using approximation posterior sampling can achieve similar performances but without theoretical guarantee~\cite{Osband:TS:2023}. 

\subsection{Theoretical guarantee}
We provide the theoretical guarantee for Algorithm~\ref{alg:main}, which holds under Assumption~\ref{assump:link-fcn}, Assumption~\ref{assump:gen-realizability} and Assumption~\ref{assump:gen-completeness}.

\begin{theorem}[High probability regret bound of Algorithm~\ref{alg:main}]\label{thm:main}
Under Assumption~\ref{assump:link-fcn}, Assumption~\ref{assump:gen-realizability} and Assumption~\ref{assump:gen-completeness}, for any $\delta\in(0,1]$ and $T\in\mathbb{N}$, with probability at least $1-\delta$, for all $t\in[T]$, it holds that
\begin{align*}
\mathrm{Regret}(t)\leq \mathcal{O}\left(H\sqrt{dt\beta_\Gc}\right),
\end{align*}
where $d=\min_{\Pi\in\{\Pi_\Delta,\Pi_\Gc\}}\dim_{\mathrm{BE}}(\Gc_\Fc,\Pi,\sqrt{1/T})$ is the BE dimension, $\Gc_\Fc$ is the function class after completing $\Fc$, and $\beta_\Gc=\sup_{h\in[H]}98\kappa^2\log(2HN_{[]}(\bar{\kappa}(2T)^{-1},\Gc_h,\norm{\cdot}_\infty)/\delta)$.
\end{theorem}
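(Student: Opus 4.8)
The plan is to combine the Bayesian posterior-sampling regret decomposition with a Bellman-eluder pigeonhole argument, where the crucial new ingredient is a concentration bound on the squared Bellman error of the posterior samples derived from the MLE generalization guarantee for the preference likelihood~(\ref{eqn:post}).

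First I would invoke the posterior-sampling identity: conditioned on the history $\Dc^{t-1}$, the sampled $f^t\sim p(\cdot\mid\Dc^{t-1})$ and the true model $f^\star$ are identically distributed --- the defining property of posterior sampling in the Bayesian framework set up above. This lets me exchange the unobservable $V_1^\star(s_1)=V_{1;f^\star}(s_1)$ for the value of a posterior sample. Because the regret pairs a fresh sample with a one-step-older sample ($\pi_t^0=\pi_{f^t}$ and $\pi_t^1=\pi_{t-1}^0=\pi_{f^{t-1}}$), I would match one copy of $V_1^\star$ to $f^t$ and, using the identity one round earlier, the other copy to $f^{t-1}$, so that up to a boundary term the per-episode regret collapses to on-policy Bellman residuals of the two samples. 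Applying the value-decomposition (simulation) lemma,
$$V_{1;f}(s_1)-V_1^{\pi_f}(s_1)=\sum_{h=1}^H\Eb_{\pi_f}\big[(f_h-\Tc_h f_{h+1})(s_h,a_h)\big],$$
reduces the whole regret to a double sum, over episodes and steps, of on-policy Bellman residuals of the sampled functions.

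Second, and this is the technical heart, I would establish the squared-Bellman-error concentration promised in the abstract. The obstacle is that preference feedback reveals only the trajectory-level reward difference $r(\tau^0)-r(\tau^1)$, and for a candidate $f$ the quantity $r(\tau)=\sum_h(f_h-P_hf_{h+1})(s_h,a_h)$ differs from the true trajectory reward precisely by the accumulated Bellman residual of $f$ along $\tau$. I would first apply a bracketing-number MLE generalization bound to the likelihood~(\ref{eqn:post}), controlling the squared Hellinger (equivalently TV) distance between the sampled and true preference distributions and contributing the $\log\Nc_{[]}(\Gc_h)$ factor. Then, using the lower bound $\Phi'(x)\geq\kappa^{-1}$ on $[-H,H]$ from Assumption~\ref{assump:link-fcn}, I would invert from the preference-probability discrepancy to an $L^2$ bound on the difference of accumulated Bellman residuals along $\tau^0$ and $\tau^1$, which is the source of the $\kappa^2$ factor (the upper bound $\bar\kappa$ enters only through a bracketing discretization and so appears logarithmically). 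Because $\tau^0$ and $\tau^1$ are rolled in by the two distinct policies $\pi_{f^t},\pi_{f^{t-1}}$, summing over $t$ lets me disentangle the aggregate residual into per-step, on-trajectory squared Bellman errors, yielding $\sum_{s\leq t}\big(\Eb_{\nu_s}(f_h^t-\Tc_h f_{h+1}^t)\big)^2\lesssim\beta_\Gc$ for the historical roll-in distributions $\nu_s\in\Pi_h$.

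Finally, with the in-sample squared Bellman error bounded by $\beta_\Gc$ and the regret expressed through out-of-sample on-policy residuals, I would run the eluder-dimension pigeonhole lemma of the BE framework (Definition~\ref{def:BE-dim}), choosing the more favourable of $\Pi_\Delta$ and $\Pi_\Gc$ to define $d$, to obtain $\sum_{t\leq T}\big|\Eb_{\mu_t}(f_h^t-\Tc_h f_{h+1}^t)\big|\lesssim\sqrt{d\,\beta_\Gc\,T}$; summing over the $H$ steps and applying Cauchy--Schwarz gives the $H\sqrt{dt\beta_\Gc}$ rate, and a union bound over the steps, episodes and the MLE concentration event delivers the high-probability statement. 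The main difficulty is the second step: decoupling the trajectory-aggregated, horizon-summed preference signal into individual per-step squared Bellman errors while paying only $\kappa^2$ and keeping the bound uniform over the completed class $\Gc_\Fc$ so that it applies to the random posterior sample $f^t$ rather than a fixed MLE. The completion machinery of Definition~\ref{defn:gen-fcn-approx} together with Assumption~\ref{assump:gen-completeness} is precisely what lets the eluder argument be run on these (possibly stochastic) samples.
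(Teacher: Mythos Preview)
Your high-level architecture---posterior-sampling identity, value decomposition into on-policy Bellman residuals, squared-Bellman-error concentration, then the eluder pigeonhole of Lemma~\ref{lemma:dist-eluder}---matches the paper's three-step proof in Appendix~\ref{app:thm_proof} exactly. Steps~I and~III of the paper are essentially what you wrote.

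The divergence, and the place where your proposal has a real gap, is in how the concentration is obtained. You propose to apply the MLE generalization bound at the \emph{trajectory level}, invert the link function via $\Phi'\geq\kappa^{-1}$ to control the difference of \emph{accumulated} Bellman residuals $\sum_h(f_h^t-\Tc_h f_{h+1}^t)(s_h,a_h)$ along $\tau^0$ versus $\tau^1$, and then ``disentangle'' this horizon-summed quantity into per-step squared errors. You correctly flag this disentangling as the main difficulty, but you do not give a mechanism for it; na\"ively one pays an extra factor of $H$ here via Cauchy--Schwarz, and the fact that $\tau^0,\tau^1$ come from different policies does not by itself separate the horizon sum.

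The paper avoids this step entirely. It applies the MLE generalization bound (Lemma~\ref{lemma:mle-gen}) \emph{per step} $h$, obtaining directly that $\sum_{i<t}\Eb_{\pi_i^0}\big[(f_h(x)-f_h^\star(x))^2\big]\lesssim\beta_{\Fc_h}$ for every $h$ and every $f$ in the confidence set~(\ref{eqn:app:version-1}). The squared Bellman error is then recovered by the elementary decomposition
\[
(f_h-\Tc_h f_{h+1})^2\;\le\;\tfrac32(f_h-f_h^\star)^2+\tfrac32(\Tc_h f_{h+1}^\star-\Tc_h f_{h+1})^2,
\]
using $f_h^\star=\Tc_h f_{h+1}^\star$, together with the contraction $(\Tc_h f_{h+1}^\star-\Tc_h f_{h+1})^2(x)\le\Eb_{y\sim P_h(\cdot\mid x)}[(f_{h+1}^\star-f_{h+1})(y)]^2$; see Lemma~\ref{lemma:concentration-2}. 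This combines two \emph{consecutive} per-step value discrepancies rather than a whole-trajectory aggregate, so no horizon disentangling is ever needed and the final bound picks up only $\beta_{\Fc_h}+\beta_{\Fc_{h+1}}$. In short: your route tries to pull per-step information out of a horizon-summed signal, while the paper never sums over the horizon in the first place.
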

Note that $\log N_{[]}((c/T),\Fc_h,\norm{\cdot}_\infty)\leq \mathcal{O}(\log T)$ and therefore $\beta_\Gc(T)\leq \widetilde{O}(\log T)$. 
Theorem~\ref{thm:main} asserts that, the RLHF problem is tractable if the completed function class $\Gc_\Fc$ has low BE dimension. I.e., Algorithm~\ref{alg:main} achieves $\widetilde{O}(\sqrt{T})$ regret, with multiplicative factors depending on the horizon $H$, the BE dimension $d$, and the log-bracketing number. 

To the best of our knowledge, this is the first eluder-type regret bound in the RLHF setting. While the eluder-type regret bound was first establish for the confidence-set based algorithm~\cite{Chi_Jin:RL_eluder:2021}, we obtain such bound for the Thompson sampling Algorithms. In Thompson sampling, the posterior distribution is essentially an MLE. Surprisingly, we are able to build the connection between the MLE generalization bound and the squared Bellman error, which is the key to establish the regret bound regarding Bellman eluder dimension. We will elaborate the key ideas in Section~\ref{sec:key-idea}.

We observe that the regret bound has the same form as that in the standard RL setting~\cite{Chi_Jin:RL_eluder:2021}. However, as will be explained later, the analysis is fundamentally different due to the posterior sampling approach adopted in our algorithm. Further, our result can be easily extended to the Thompson sampling algorithm for the standard RL.

Besides the high probability regret bound, we also provide the expected regret bound below.
\begin{theorem}[Expected regret of Algorithm~\ref{alg:main}]\label{thm:main-2}
Under the same assumption as in Theorem~\ref{thm:main}, for all $t\in[T]$, it holds that
\begin{align*}
\mathrm{Regret}(t)\leq \mathcal{O}\left(H\sqrt{dt\beta'_\Gc}+\frac{1}{T}\sqrt{td}\right)
\end{align*}
where $d=\min_{\Pi\in\{\Pi_\Delta,\Pi_\Gc\}}\dim_{\mathrm{BE}}(\Gc_\Fc,\Pi,\sqrt{1/T})$ is the BE dimension, $\Gc_\Fc$ is the function class after completing $\Fc$, and $\beta'_\Gc=\sup_{h\in[H]}98\kappa^2\log(2HTN_{[]}(\bar{\kappa}(2T)^{-1},\Gc_h,\norm{\cdot}_\infty))$.
\end{theorem}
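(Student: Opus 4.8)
The plan is to obtain Theorem~\ref{thm:main-2} as a corollary of the high-probability bound in Theorem~\ref{thm:main}, exploiting the fact that the confidence level $\delta$ enters the latter only through the logarithmic factor $\beta_\Gc$. The starting observation is a purely algebraic identity: evaluating $\beta_\Gc$ at $\delta=1/T$ replaces the factor $2H\Nc_{[]}(\bar{\kappa}(2T)^{-1},\Gc_h,\norm{\cdot}_\infty)/\delta$ inside the logarithm by $2HT\,\Nc_{[]}(\bar{\kappa}(2T)^{-1},\Gc_h,\norm{\cdot}_\infty)$, so that $\beta_\Gc\big|_{\delta=1/T}=\beta'_\Gc$ exactly. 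Hence the choice $\delta=1/T$ is the canonical one: it already reproduces the leading term $\mathcal{O}(H\sqrt{dt\beta'_\Gc})$ on the $(1-1/T)$-probability event guaranteed by Theorem~\ref{thm:main}, with the same BE dimension $d=\min_{\Pi\in\{\Pi_\Delta,\Pi_\Gc\}}\dim_{\mathrm{BE}}(\Gc_\Fc,\Pi,\sqrt{1/T})$, and the only remaining work is to account for the complementary failure event in expectation.

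Concretely, let $\Ec$ be the event of probability at least $1-1/T$ on which Theorem~\ref{thm:main} (instantiated with $\delta=1/T$) certifies $\mathrm{Regret}(t)\le\mathcal{O}(H\sqrt{dt\beta'_\Gc})$ simultaneously for all $t\in[T]$. I would then decompose
\begin{align*}
\Eb[\mathrm{Regret}(t)]=\Eb\big[\mathrm{Regret}(t)\,\lv_{\Ec}\big]+\Eb\big[\mathrm{Regret}(t)\,\lv_{\Ec^{c}}\big],
\end{align*}
where the first summand is at most $\mathcal{O}(H\sqrt{dt\beta'_\Gc})$, matching the leading term of the claim. Everything therefore reduces to showing that the failure contribution $\Eb[\mathrm{Regret}(t)\,\lv_{\Ec^{c}}]$ is of the stated lower order $\mathcal{O}(T^{-1}\sqrt{td})$. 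Since each per-episode gap $2V_1^{*}(s_1)-V_1^{\pi_t^0}(s_1)-V_1^{\pi_t^1}(s_1)$ lies in $[0,2H]$ (rewards are in $[0,1]$ over horizon $H$), the regret is deterministically bounded by $2Ht$, which is the crude handle available on $\Ec^{c}$.

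I expect this last step to be the main obstacle. The crude bound $\mathrm{Regret}(t)\le 2Ht$ only yields $\Eb[\mathrm{Regret}(t)\,\lv_{\Ec^{c}}]\le 2Ht/T=\mathcal{O}(H)$, which is already absorbed into the leading term but is far coarser than the displayed $\mathcal{O}(T^{-1}\sqrt{td})$. To recover the sharper residual I would replace the hard good/bad split by a tail integration, $\Eb[\mathrm{Regret}(t)]=\int_0^\infty\Pb(\mathrm{Regret}(t)>x)\,\dif x$, cut the integral at the threshold $x_0=\mathcal{O}(H\sqrt{dt\beta'_\Gc})$, and invert the relation $x=\Theta\big(H\sqrt{dt\,\kappa^2\log(H\Nc_{[]}/\delta)}\big)$ to obtain a sub-Gaussian-type tail that integrates to the correction. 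The delicate bookkeeping is that this residual must be arranged to collapse to exactly $\mathcal{O}(T^{-1}\sqrt{td})$: the $\sqrt{td}$ shape reflects the eluder sum evaluated at the resolution $\sqrt{1/T}$ that defines $d$, while the $T^{-1}$ prefactor tracks both the residual failure mass and the bracketing discretization taken at scale $\bar{\kappa}(2T)^{-1}\sim 1/T$. Verifying that these factors combine precisely as stated is the one genuinely technical calculation; otherwise the argument is a direct transcription of Theorem~\ref{thm:main} with $\delta=1/T$, and the same reasoning applies uniformly over $t\in[T]$.
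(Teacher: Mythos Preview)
Your approach is essentially the paper's: the paper's entire proof is the two sentences immediately following the theorem, namely ``The first term is incurred by the high probability event and the second term is by the low probability event. By the law of total probability and selecting confidence level parameter $\delta$ lead to the result.'' Setting $\delta=1/T$ so that $\beta_\Gc|_{\delta=1/T}=\beta'_\Gc$, and splitting $\Eb[\mathrm{Regret}(t)]$ over $\Ec$ and $\Ec^c$, is exactly this.

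Where you diverge is in the treatment of the failure contribution. You correctly observe that the crude deterministic bound yields $\Eb[\mathrm{Regret}(t)\lv_{\Ec^c}]\le 2Ht/T$, which is absorbed by the leading term, and then propose a tail-integration refinement to recover the precise displayed residual $\mathcal{O}(T^{-1}\sqrt{td})$. The paper does not do this: it offers no argument beyond the one-line sketch, and there is no indication the authors perform any tail analysis. In particular, the crude $2Ht/T$ you compute is \emph{larger} than the stated $T^{-1}\sqrt{td}$ in typical regimes, so the paper's second term cannot be exactly what the simple good/bad split produces; it appears to be either an artifact of how the authors chose to record the residual or simply imprecise. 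Since both quantities are dominated by $H\sqrt{dt\beta'_\Gc}$, the overall $\mathcal{O}(\cdot)$ claim is unaffected, and your tail-integration machinery, while more refined than anything in the paper, is not needed to validate the theorem as stated.
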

The first term is incurred by the high probability event and the second term is by the low probability event. By the law of total probability and selecting confidence level parameter $\delta$ lead to the result in Theorem~\ref{thm:main-2}.

\subsection{Key ideas in proving Theorem~\ref{thm:main}}\label{sec:key-idea}

In this section, we use $\widehat{f}^t$ (instead of $f^t$ in the algorithm) to denote the posterior to emphasize it is an estimator.

Our first observation is that the regret expression can be upper bounded by the cumulative Bellman error, i.e.,
\begin{align}
\mathrm{Regret}(t)\leq 2\sum_{h=1}^H\sum_{t=1}^T\underset{(s_h,a_h)\sim\pi_0^t}{\Eb}\left[(f_h^t-\Tc_hf_{h+1}^t)(s_h,a_h)\right]. \label{eqn:new++}
\end{align}

The follow lemma concerning the distributional eluder dimension~\cite{Chi_Jin:RL_eluder:2021} is the key to build connection between $\sum_{i=1}^{t-1}\underset{(s_h,a_h)\sim\pi_i}{\Eb}[(f_h^t-\Tc_h f_{h+1}^t)(s_h,a_h)]$ and $\sum_{i=1}^{t}\underset{(s_h,a_h)\sim\pi_i}{\Eb}[(f_h^i-\Tc_h f_{h+1}^i)(s_h,a_h)]^2$.

\begin{lemma}[Simplification of Lemma~\ref{lemma:dist-eluder}]\label{lemma:dist-eluder-sim}
Given a function class $\Phi$ defined on $\mathcal{X}$ with $|\phi(x)|\leq 1$ for all $(g,x)\in\Phi\times\mathcal{X}$, and a family of probability measure $\Pi$ over $\mathcal{X}$. Suppose sequence $\{\phi_k\}_{k=1}^K\subseteq \Phi$ and $\{\mu_k\}_{k=1}^K\subseteq \Pi$ satisfy that for all $k\in[K]$, $\sum_{t=1}^{k-1}(\Eb_{\mu_t}[\phi_k])^2\leq \beta$. Then for all $k\in[K]$ and $\omega >0$,
\begin{align*}
\sum_{t=1}^k|\Eb_{\mu_t}[\phi_t]|\leq \mathcal{O}\left(\sqrt{\dim_{\mathrm{DE}}(\Phi,\Pi,1/k)\beta k}\right).
\end{align*}
\end{lemma}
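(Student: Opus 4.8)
The plan is to reduce the bound on $\sum_{t=1}^k |\Eb_{\mu_t}[\phi_t]|$ to a single counting estimate controlled by the distributional eluder dimension, following the pigeonhole scheme originally developed for (distributional) eluder dimensions. Write $a_t = |\Eb_{\mu_t}[\phi_t]| \in [0,1]$ (the bound $a_t\le 1$ is immediate from $|\phi|\le 1$), abbreviate $d_\alpha = \dim_{\mathrm{DE}}(\Phi,\Pi,\alpha)$, and recall from the hypothesis that $\sum_{s<t}(\Eb_{\mu_s}[\phi_t])^2 \le \beta$ for every $t$. The whole argument rests on the following claim: for every resolution $\alpha \ge 1/k$,
\[
\bigl|\{t\le k : a_t > \alpha\}\bigr| \;\le\; \Bigl(\tfrac{\beta}{\alpha^2}+1\Bigr)\,d_\alpha.
\]

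To prove this claim I would partition the indices $t$ with $a_t>\alpha$, processed in increasing order, greedily into disjoint subsequences $B_1,\dots,B_m$: an index $t$ is appended to the first block $B_j$ in which $\mu_t$ is $\alpha$-independent (in the sense of Definition~\ref{def:DE-dim}) of the measures already placed in $B_j$, and if $\mu_t$ is $\alpha$-dependent on every existing block a fresh block is opened. By construction each block is an $\alpha$-independent sequence, so Definition~\ref{def:DE-dim} caps its length at $d_\alpha$. To bound the number of blocks, observe that at the moment a new block is opened the current index $t$ is $\alpha$-dependent on each of the existing $m'$ blocks; since the witness $\phi_t$ satisfies $|\Eb_{\mu_t}[\phi_t]| = a_t > \alpha$, the contrapositive of $\alpha$-dependence forces $\sum_{s\in B_j}(\Eb_{\mu_s}[\phi_t])^2 > \alpha^2$ for each such block $B_j$. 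Summing over the $m'$ disjoint blocks and invoking the budget gives $m'\alpha^2 < \sum_{s<t}(\Eb_{\mu_s}[\phi_t])^2 \le \beta$, so a new block is opened only while $m' < \beta/\alpha^2$; hence $m \le \beta/\alpha^2 + 1$ and $|\{t\le k: a_t>\alpha\}| = \sum_j |B_j| \le (\beta/\alpha^2+1)d_\alpha$, establishing the claim.

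Given the claim, the remainder is a routine layer-cake summation. Sorting the values in decreasing order as $a_{(1)}\ge\cdots\ge a_{(k)}$, the counting bound (together with monotonicity $d_\alpha \le d_{1/k} =: d$ for $\alpha \ge 1/k$) inverts to $a_{(i)} \le \sqrt{\beta d/(i-d)}$ for $i>d$, while $a_{(i)}\le 1$ for $i\le d$. Splitting the sum at resolution $1/k$ and reindexing $j=i-d$,
\[
\sum_{t=1}^k a_t \;=\; \sum_{i\,:\,a_{(i)}>1/k} a_{(i)} \;+\; \sum_{i\,:\,a_{(i)}\le 1/k} a_{(i)} \;\le\; \Bigl(d + \sqrt{\beta d}\sum_{j=1}^{k} j^{-1/2}\Bigr) + k\cdot\tfrac{1}{k},
\]
and since $\sum_{j=1}^k j^{-1/2}\le 2\sqrt{k}$ and $d \le \sqrt{\beta d k}$ in the regime of interest, the right-hand side collapses to $\mathcal{O}(\sqrt{d\,\beta\,k}) = \mathcal{O}(\sqrt{\dim_{\mathrm{DE}}(\Phi,\Pi,1/k)\,\beta\,k})$, as required.

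I expect the partitioning/counting step to be the crux of the argument. The delicate points are getting the direction of the dependence relation right — using $\phi_t$ itself as the witness that certifies $\alpha$-dependence and thereby charges at least $\alpha^2$ to each block it meets — and converting the $\ell_2$ budget $\beta$ into a bound on the \emph{number} of blocks rather than on their lengths. The eluder-dimension cap on each block's length and the final threshold summation are then entirely standard.
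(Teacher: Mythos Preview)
Your argument is correct and follows the standard pigeonhole scheme for distributional eluder dimension. The paper itself does not prove this lemma from scratch: it is stated as a specialization of Lemma~\ref{lemma:dist-eluder} (Lemma~41 in \cite{Chi_Jin:RL_eluder:2021}), obtained simply by setting $C=1$ and $\omega=1/k$ so that the extra terms $\min\{k,d\}\cdot C$ and $k\omega$ become lower order and are absorbed into the $\mathcal{O}(\cdot)$. What you have written is precisely the proof of that cited black-box lemma, so you are unpacking rather than bypassing the paper's route; the two are mathematically identical. One small caveat: the absorption $d\le\sqrt{d\beta k}$ requires $\beta\ge d/k$, which is implicit in the paper's big-$\mathcal{O}$ usage as well (and holds in the application since $\beta\gtrsim 1$).
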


Inspired by the above lemma, we aim to show the concentration-type inequality $\sum_{i=1}^{t}\underset{(s_h,a_h)\sim\pi_i}{\Eb}[(f_h^i-\Tc_h f_{h+1}^i)(s_h,a_h)]^2<\mathcal{O}(\beta)$. In the work of \cite{Chi_Jin:RL_eluder:2021}, such inequality is established by concentration inequality and its special algorithm design. This work features the posterior sampling which is fundamentally different, we therefore adopt a totally different approach. 

Our approach first leverage the fact that our posterior distribution is the maximum likelihood estimator (MLE). Based on the MLE generalization bound, we are able to construct the following confidence set 
\begin{align}
\Vc_1^\Fc=\left\{(f_1,\ldots,f_H)\in\Fc:\sum_{t=1}^T\underset{\Hc_{t-1}}{\Eb}\left[\underset{f_h}{\Eb}\left[(f_h(x_t)-\widehat{f}_h(x_t))^2\middle|\Hc_{t-1}\right]\right]\leq 4\beta_{\Fc_h}(T),\forall h\right\},  \label{eqn:version-1}
\end{align}
where $\beta_{\Fc_h}(t)=98\kappa^2\log(2HN_{[]}(\bar{\kappa}(2t)^{-1},\Fc_h,\norm{\cdot}_\infty)/\delta)$, $\widehat{f}_h$ is the maximum likelihood estimator at step $h$, $f_h$ is sampled from the posterior of $f_{h}^*$ conditioning on $\Hc_{t-1}$ in the inner expectation, and $x_t=(s_h^t,a_h^t)$ is the $t$-th round state-action pair at step $h$. 
\begin{lemma}\label{lemma:key-idea}
Define $\Vc_1^\Fc$ in (\ref{eqn:version-1}). With probability at least $1-\delta$, it holds that (1) $f^*\in\Vc_1^\Fc$; and (2) for any $f\in\Vc_1^\Fc$, $\sum_{i=1}^{t-1}\underset{x\sim\pi_i^0}{\Eb}[(f_h(x)-(\Tc_h f_{h+1}^t)(x))^2]\leq \mathcal{O}\left(\beta_{\Fc_h}(t)+\beta_{\Fc_{h+1}}(t)\right)$.
\end{lemma}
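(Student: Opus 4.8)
The plan is to prove the two claims separately, with the link-function smoothness (Assumption~\ref{assump:link-fcn}) serving as the bridge between the likelihood geometry of the preference model and the squared Bellman error. The structural fact I exploit throughout is the telescoping identity for the induced trajectory reward: writing the $f$-dependence of $r(\tau)=\sum_{h=1}^H(f_h-P_hf_{h+1})(s_h,a_h)$ explicitly as $r_f$, and using $\Tc_h f_{h+1}=r_h+P_hf_{h+1}$, one has
\begin{align*}
r_f(\tau)-r_{f^*}(\tau)=\sum_{h=1}^H\left(f_h-\Tc_h f_{h+1}\right)(s_h,a_h),
\end{align*}
so that the difference of induced trajectory rewards is exactly the trajectory sum of Bellman residuals, and it vanishes identically when $f=f^*$ (since $f_h^*=\Tc_h f_{h+1}^*$).

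For claim~(1), I would start from the MLE generalization bound, which controls the cumulative squared Hellinger distance between the preference likelihood induced by the posterior sample and the true preference likelihood by a quantity of order $\log N_{[]}(\cdot)$; the union over the $H$ steps and the choice of bracketing resolution $\bar{\kappa}(2T)^{-1}$ produce precisely the constant $\beta_{\Fc_h}(T)$. The key conversion step uses that, under Assumption~\ref{assump:link-fcn}, the lower bound $\Phi'\ge\kappa^{-1}$ on $[-H,H]$ makes the squared Hellinger distance between two Bernoulli preference models bound a $\kappa^{-2}$-multiple of the squared difference of their reward gaps from below. Combining this with the telescoping identity converts the Hellinger bound into a bound on the posterior-averaged squared difference $\Eb_{f_h}[(f_h(x_t)-\widehat{f}_h(x_t))^2]$, which is exactly the constraint defining $\Vc_1^\Fc$; applying the same estimate to the truth (whose residuals vanish) yields $f^*\in\Vc_1^\Fc$ on the high-probability event.

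For claim~(2), the central device is the decomposition, valid because $f_h^*=\Tc_h f_{h+1}^*$,
\begin{align*}
f_h-\Tc_h f_{h+1}^t=(f_h-f_h^*)+\Tc_h\!\left(f_{h+1}^*-f_{h+1}^t\right),
\end{align*}
whose two summands I would bound under the visitation measure $\pi_i^0$ separately. For the first, membership of both $f$ and $f^*$ in $\Vc_1^\Fc$ forces each to lie close to the MLE $\widehat{f}_h$ in the averaged sense, so by the triangle inequality $\sum_{i=1}^{t-1}\Eb_{x\sim\pi_i^0}[(f_h-f_h^*)^2]$ is of order $\beta_{\Fc_h}(t)$. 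For the second, the non-expansiveness of the max operator together with Jensen's inequality gives $(\Tc_h(f_{h+1}^*-f_{h+1}^t))^2\le\Eb_{s'}[\max_{a'}(f_{h+1}^*-f_{h+1}^t)^2(s',a')]$; transporting through the transition kernel reduces this to a level-$(h+1)$ squared difference at the successor state, controlled by the level-$(h+1)$ constraint up to $\beta_{\Fc_{h+1}}(t)$. Adding the two and using $(a+b)^2\le 2a^2+2b^2$ yields the claimed $\mathcal{O}(\beta_{\Fc_h}(t)+\beta_{\Fc_{h+1}}(t))$; generalized completeness (Assumption~\ref{assump:gen-completeness}) is what guarantees $\Tc_h f_{h+1}^t\in\Gc_h$ so that every object above stays inside the generalized class.

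The main obstacle I anticipate is the conversion in claim~(1) from the MLE/Hellinger guarantee to the squared Bellman error, precisely the concentration-type inequality the paper advertises as of independent interest. The difficulty is threefold: the likelihood sees only the trajectory-summed reward gap of a \emph{paired} comparison rather than per-step residuals, so one must use the telescoping identity together with the fact that $\tau_t^0,\tau_t^1$ are generated by the distinct policies $\pi_t^0,\pi_t^1$ to decouple the comparison into a single-trajectory, single-step statement against a fixed visitation measure; the bi-Lipschitz conversion must hold uniformly on $[-H,H]$ and is the source of the polynomial $\kappa^2$ factor, while the upper bound $\Phi'\le\bar{\kappa}^{-1}$ enters only through the bracketing resolution and hence contributes the logarithmic $\bar{\kappa}$ dependence; and the passage from a fixed bracket to the full class, combined with the martingale structure of the conditional expectations $\Eb_{\Hc_{t-1}}[\cdot]$ and the matching of the max-over-actions target to the level-$(h+1)$ visitation measure, requires a careful union bound over the $H$ steps and over the brackets at resolution $\bar{\kappa}(2T)^{-1}$.
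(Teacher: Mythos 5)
Your proposal takes essentially the same route as the paper's own proof: claim (1) rests on the MLE generalization bound combined with the link-function condition of Assumption~\ref{assump:link-fcn} (the paper simply imports this as Lemma~\ref{lemma:mle-gen} and Lemma~\ref{lemma:version-1} from prior work rather than re-deriving the Hellinger-to-squared-gap conversion you sketch), and claim (2) uses exactly the paper's decomposition of $f_h-\Tc_h f_{h+1}$ around $f_h^*=\Tc_h f_{h+1}^*$, bounding $(f_h-f_h^*)^2$ via confidence-set membership of both $f$ and $f^*$ relative to the MLE $\widehat{f}_h$, and bounding the Bellman-operator difference via Jensen/non-expansiveness transported through $P_h$ to the level-$(h+1)$ constraint, up to an immaterial difference in constants ($(a+b)^2\le 2a^2+2b^2$ versus the paper's cross-term bound giving $3/2$). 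The trajectory-level-to-per-step decoupling you flag as the main anticipated obstacle is likewise not explicitly resolved in the paper, which absorbs it into the per-step application of the imported MLE lemma with $x_t=(s_h^t,a_h^t)$.
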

The proof is deferred to Appendix~\ref{app:version-space}. We comment that Lemma~\ref{lemma:key-idea} is stronger than its counterpart for the confidence-set based approach for standard RL with general function approximation (Lemma~39 in \cite{Chi_Jin:RL_eluder:2021}). Specifically, the inequality in \cite{Chi_Jin:RL_eluder:2021} only holds for the function actually employed in each round, while our inequality holds for any function within the confidence set. 


Combining Lemma~\ref{lemma:key-idea} and Lemma~\ref{lemma:dist-eluder-sim} gives an upper bound for the term $\sum_{t=1}^T\underset{(s_h,a_h)\sim\pi_0^t}{\Eb}\left[(f_h^t-\Tc_hf_{h+1}^t)(s_h,a_h)\right]$, and the regret (\ref{eqn:new++}) can thus be analyzed.

\subsection{Proof sketch of Theorem~\ref{thm:main}}
We provide proof sketch in this section and the complete proof is deferred to Appendix~\ref{app:thm_proof}.

{\bf Step I: Simplifying the regret term.} The main contribution is to show $\Eb_{\Hc^{t-1}}\Eb_{f^*}[V_{1;f^t}^{\pi_0^t}|\Hc^{t-1}]=\Eb_{\Hc^{t-1}}\Eb_{f^*}[V_{1;f^*}^{\pi^*}|\Hc^{t-1}]$, where we define $V_{1;f}^\pi$ as the value function under policy $\pi$ with respect to a model $f\in\Fc$. Define $\mathcal{H}^{t}:=\{(\tau_0^s,\tau_1^s,o^s)\}_{s=1}^t$ as the history up to round $t$.

In fact, since the posterior $p(f|\Dc^t)$ is the product of the prior and the likelihood, thus $f^t$ and $f^*$ are identically distributed given history $\Hc^{t-1}$. Based on the above observation and the fact that $\pi_0^t$ and $\pi^*$ are optimal policies of $f^t$ and $f^*$, we conclude that $V_{1;f^t}^{\pi_0^t}$ and $V_{1;f^*}^{\pi^*}$ are identically distributed given history $\Hc^{t-1}$, and we have $\Eb_{\Hc^{t-1}}\Eb_{f^*}[V_{1;f^t}^{\pi_0^t}|\Hc^{t-1}]=\Eb_{\Hc^{t-1}}\Eb_{f^*}[V_{1;f^*}^{\pi^*}|\Hc^{t-1}]$.

The Bayesian regret can be bounded as follows
\begin{align*}
\mathrm{Regret}(T)&=\sum_{t=1}^T\left(2V_1^*-V_1^{\pi_0^t}-V_1^{\pi_1^t}\right)(s_1)  
\leq 2\sum_{t=0}^T\left(V_1^*-V_1^{\pi_0^t}\right)(s_1) \\
&=2\sum_{t=0}^T\underset{f^*}{\Eb}\left(V_{1;f^*}^*-V_{1;f^*}^{\pi_0^t}\right)(s_1) \\
&=2\sum_{t=0}^T\underset{\Hc^{t-1}}{\Eb}\left[\underset{f^*}{\Eb}\left(V_{1;f^t}^{\pi_0^t}-V_{1;f^*}^{\pi_0^t}\right)(s_1)\middle|\Hc^{t-1}\right] \\
&=2\sum_{t=0}^T\underset{f^*}{\Eb}\left(V_{1;f^t}^{\pi_0^t}-V_{1;f^*}^{\pi_0^t}\right)(s_1).
\end{align*}

{\bf Step II: Bounding the regret by Bellman error.} By standard policy loss decomposition (Lemma~\ref{lemma:loss-decomp}), we have
\begin{align*}
\mathrm{Regret}(T)&\leq 2\sum_{t=0}^T\underset{f^*}{\Eb}\left(V_{1;f^t}^{\pi_0^t}-V_{1;f^*}^{\pi_0^t}\right)(s_1) =2\sum_{t=0}^T\sum_{h=1}^H\Ec(f^t,\pi_0^t,h).
\end{align*}

{\bf Step III: Bounding cumulative Bellman error using DE dimension.} This step is the novel part in the analysis of model-free RLHF with general function approximation, and we elaborate this in detail in Section~\ref{sec:key-idea}. 

At high level, we show the squared Bellman error of the function selected in round $t$ is bounded, i.e., $\sum_{i=1}^{t}\underset{(s_h,a_h)\sim\pi_i}{\Eb}[(f_h^i-\Tc_h f_{h+1}^i)(s_h,a_h)]^2\leq \mathcal{O}(\beta)$. Then by a pigeon-hold principle in Lemma~\ref{lemma:dist-eluder-sim}, the Bellman error $\sum_{i=1}^{t}\underset{(s_h,a_h)\sim\pi_i}{\Eb}[(f_h^i-\Tc_h f_{h+1}^i)(s_h,a_h)]^2$ can be bounded by the term $\mathcal{O}(\sqrt{td\beta})$ where $d$ is the BE dimension.

Finally, summing over horizon $h\in[H]$ completes the proof.


\if{0}
\section{Simulation results}
In this section, we evaluate the model-free Thompson algorithm for online RLHF described by Algorithm~\ref{alg:main}. We introduce the basic setup of the simulation and highlight the practical design below.

We consider a randomly generated MDP and the distribution $q(Q)$ of $Q$ values using a mixture of Gaussian (MoG) with $k$ mixture components.  Each component is initialized as a normal distribution $\Nc(\mathbf{0},\sigma^2\mathbf{I})$, where the standard deviation $\sigma=10$. Instead of using maximum likelihood estimator as in line 3 of Algorithm~\ref{alg:main}, we adopt the variational inference method, which maximizes the evidence lower bound (ELBO). Specifically, we maximize the ELBO objective $\Eb_{q}[\log p(\mathrm{feedback|Q})]-\mathrm{KL}(q(Q)||p(Q))$ in each round and update the parameters of MoG. Furthermore, we conduct mini-batch learning with batch size $n$ rather than online batch as employed in Algorithm~\ref{alg:main}. In other words, we collect $n$ trajectory comparison data ($2n$ trajectories in total) for each sampled $Q$ value. Then, during each iteration, we obtain $m$ samples of $Q$ values from the update MoG and use the sample estimate to approximate the ELBO objective value. Using gradient ascent for ELBO objective the parameters of the MoG is updated. 
This could potentially accelerate the learning process, improve the stability  and   convergence rate. In the experiment, $k=3$, $n=10$, and $m=20$.


Based on the practical design introduced above, we implement the Thompson sampling algorithm for RLHF and obtain the numerical results below. Fig.~\ref{fig:sim-V} shows that the value of the learned function fluctuates in the beginning and converges as the number of iterations increases. Moreover, the value of our learned function converges to 58, which is exactly the value of the underlying true model. Meanwhile, Fig.~\ref{fig:sim-avg-reg} demonstrates the average regret over iterations, where the average regret at iteration $t$ is defined as $\mathrm{Regret}(t)/t$. We observe that the average regret approaches 0 as the number of iterations increases. This asymptotic phenomenon illustrates the the effectiveness of Thomson sampling for online RLHF and coincides with our theoretical result.

\begin{figure}[h]
    \centering
    \begin{subfigure}{0.8\textwidth}
        \centering
        \includegraphics[width=\linewidth]{value.png}
        \caption{Value of the function over iterations.}
        \label{fig:sim-V}
    \end{subfigure}
    
    \vspace{1em} 

    \begin{subfigure}{0.8\textwidth}
        \centering
        \includegraphics[width=\linewidth]{avg-regret.png}
        \caption{Average regret over iterations.}
        \label{fig:sim-avg-reg}
    \end{subfigure}

    \caption{Comparison of value function and average regret over iterations.}
    \label{fig:sim-combined}
\end{figure}

\fi

\section{Conclusion}
In this work, we proposed a new model-free Thompson sampling algorithm for RLHF with general function approximation. We use Bellman eluder (BE) dimension as the complexity measure of the function class and characterize the complexity of our proposed algorithm. The regret bound scales in $\sqrt{T}$, and other multiplicative factor depends polynomially on the horizon $H$, the BE dimension $d$ and the $\log$-bracketing number of the function class. Towards the analysis, we first established the concentration-type inequality of the squared Bellman error based on the MLE generalization bound, which plays the crucial rules in obtaining the eluder-type regret bound and may be of independent interest.

\section*{Acknowledgment}
This material is based upon work supported by the Air Force Office of Scientific Research under award
number FA9550-21-1-0085, and under NSF grant \#2207759.

\appendix

\section{Related works}

{\bf RLHF algorithms.} Proximal Policy Optimization (PPO)~\cite{Schulman:PPO:2017} is the most popular algorithm in large language models (LLMs).However,  PPO suffers from instability, inefficiency, and high sensitivity to both hyperparameters~\cite{Choshen:PPO:2019} and code-level optimizations~\cite{Engstrom:PPO:2020}, which make it difficult to achieve the optimal performance in Chat-GPT4~\cite{ChatGPT4-report} and replicate its performance. Further, it requires the integration of additional components including a reward model, a value network (critic), and a reference model, potentially as large as the aligned LLM~\cite{Ouyang:ChatGPT:2022,Touvron:2023}. To resolve these aforementioned limitations, researchers have explored alternative strategies for LLM alignment. One approach is the reward-ranked finetuning (RAFT) that iteratively finetunes the model on the best outputs from a set of generated responses to maximize reward~\cite{dong2023raft,2023arXiv230405302Y,Touvron:2023,2023arXiv230808998G}. Another line of research builds upon the KL-regularized formulation~\cite{2023arXiv230518290R,2023arXiv230602231Z,2023arXiv230916240W,2023arXiv230906657L,2023arXiv231010505L}. Notably, Direct Preference Optimization (DPO)~\cite{2023arXiv230518290R} has been widely adopted as a promising alternative to PPO with notable stability and competitive performance. Similar to DPO, some other works also aims to optimize the LLM directly from the preference data~\cite{2023arXiv230510425Z,2023arXiv231012036G} but the theoretical analysis is still lacking.

{\bf RLHF theory.} Dueling bandits are simplified settings within the RLHF framework where the learner proposes two actions and only provide preference feedback. Numerous works have studied the dueling bandits~\cite{Yue:bandit:2012,Saha:bandit:2021,Bengs:Bandit:2021} and contextual dueling bandits~\cite{Dudik:bandit:2015,Saha:bandit:2022,Yue:bandit:2024}. RLHF has been first investigated in the tabular setting~\cite{Yichong:RLHF:2020,Ellen:RLHF:2019,Saha:RLHF:2023}. Beyond tabular setting, \cite{Xiaoyu:RLHF:2022} proposed the first confidence-set based algorithm for the online RLHF with general function approximation. Later \cite{Yuanhao:RLHF:2023} proposed a framework that transforms exiting sample-efficient RL algorithms to efficient online RLHF algorithms. Additionally, \cite{Wenhao:RLHF:2023,Zihao:RLHF:2023} investigated general function approximation for offline RLHF. Additionally, \cite{Xiong:RLHF:2023} introduced hybrid RLHF. The most relevant work~\cite{Runzhe:RLHF:2023} developed algorithms for online RLHF for both linear function approximation and nonlinear function approximation. Despite both their work and ours leverage the TS idea, they propose a model-based algorithm while our algorithm is model-free. Further, the nonlinear function approximation in their work and the general function approximation proposed in our work is fundamentally different.

{\bf Thompson sampling (TS).} Thompson sampling is a popular class of algorithms first proposed for bandit problem~\cite{Thompson:TS:1933}, and has strong empirical performance~\cite{Osband:TS:2018,IshFaq:TS:2023}. The class of Thompson sampling algorithms can be regarded as value based method, but the mechanism for exploration is fundamentally different from UCB-based algorithms. Thompson sampling has been observed to perform better than UCB algorithms, and the Bayesian community has developed posterior approximation techniques to help sample from the posterior. Despite its empirical success, the theoretical analysis is still limited. Thomson sampling has been investigated theoretical learning problems including bandit problems~\cite{Agrawal:TS:2012,Agrawal:TS:2017,Agrawal:TS:2013,SiweiWang:TS:2018} and RL theory~\cite{Russo:TS:2019,Zanette:TS:2019,IshFaq:TS:2021,ZhihanXiong:TS:2022,Wenhao:RLHF:2023,Zihao:RLHF:2023}. However, it is unclear if Thompson sampling can achieve optimal worst cast frequentist regret bound for the general case (even for the linear bandits)~\cite{Agrawal:TS:2013,Foster:bandit:ICML:2020}, even though some results for Bayesian regret are known where the regret averages over a known prior distribution~\cite{Russo:bandit:2014}.

\section{Squared Bellman error}\label{app:version-space}

In this section, we introduce the following two lemmas. The first lemma shows that with high probability any function in the confidence set has low Bellman error over the collected datasets $\Dc_1,\ldots,\Dc_H$ as well as the the distributions from which $\Dc_1,\ldots,\Dc_H$ are sampled.

\begin{lemma}[Informal]\label{lemma:concentration-1}
For any $(f_1,\ldots,f_h)\in\Vc_1^\Fc$, where $\Vc_1^\Fc$ is the confidence set defined in (\ref{eqn:app:version-1}), with probability at least $1-\delta$, for all $(t,h)\in[T]\times[H]$ we have
\begin{itemize}
\item[(1)] $\sum_{i=1}^{t-1}\underset{x\sim\pi_i^0}{\Eb}[(f_h^t(x)-(\Tc_h f_{h+1}^t)(x))^2]\leq \mathcal{O}\left(\beta_{\Fc_h}(t)+\beta_{\Fc_{h+1}}(t)\right)$.
\item[(2)] $\sum_{x\in\Dc_h^{t-1}}(f_h^t(x)-(\Tc_h f_{h+1}^t)(x))^2\leq \mathcal{O}\left(\beta_{\Fc_h}(t)+\beta_{\Fc_{h+1}}(t)\right)$.
\end{itemize}
Here $\pi_i^0$ stands for the greedy policy selected in round $i$, $\Dc_h^{t-1}=\{(s_h^j,a_h^j)\}_{j\in\{0,1\}}$ is the dataset at step $h$ up to round $t$, and $\beta_{\Fc_h}(t)=98\kappa^2\log(2HN_{[]}(\bar{\kappa}(2t)^{-1},\Fc_h,\norm{\cdot}_\infty)/\delta)$.
\end{lemma}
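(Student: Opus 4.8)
The plan is to derive both bounds from a single maximum-likelihood concentration inequality, converting the trajectory-level preference likelihood into per-step, per-policy Bellman errors. Throughout I write the Bellman residual $E_h^f(s,a):=(f_h-\Tc_hf_{h+1})(s,a)$ and the model-implied trajectory reward $r_f(\tau)=\sum_h(f_h-P_hf_{h+1})(s_h,a_h)$. First I would invoke a standard maximum-likelihood deviation bound for the preference likelihood $p(\Dc^t\mid f)$ in~(\ref{eqn:post}). Covering each $\Fc_h$ by brackets of width $\bar\kappa(2t)^{-1}$ and using that $\Phi$ is $\bar\kappa^{-1}$-Lipschitz (Assumption~\ref{assump:link-fcn}), so that a width-$\bar\kappa(2t)^{-1}$ bracket perturbs the induced Bernoulli likelihood by only $\mathcal{O}(1/t)$, a union bound over the $N_{[]}(\bar\kappa(2t)^{-1},\Fc_h,\norm{\cdot}_\infty)$ brackets yields, with probability at least $1-\delta$,
\[
\sum_{s=1}^{t-1}\underset{\Hc_{s-1}}{\Eb}\Big[\underset{f}{\Eb}\big[d_{\mathrm H}^2(\Pb_{f}^s,\Pb_{f^*}^s)\,\big|\,\Hc_{s-1}\big]\Big]\ \lesssim\ \log\!\big(N_{[]}/\delta\big),
\]
where $\Pb_f^s$ is the predictive preference distribution under $f$ at round $s$ and $d_{\mathrm H}$ is the Hellinger distance. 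The $\bar\kappa$ scale in the bracketing radius is exactly why $\bar\kappa$ enters the final bound only logarithmically.

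\textbf{Step 2 (Hellinger to reward differences).} Since $\Pb_f^s$ is Bernoulli with mean $\Phi(\Delta r_f^s)$, where $\Delta r_f^s=r_f(\tau_0^s)-r_f(\tau_1^s)$, I use $d_{\mathrm H}^2\gtrsim(\Phi(\Delta r_f^s)-\Phi(\Delta r_{f^*}^s))^2$ together with the lower bound $\Phi'\ge\kappa^{-1}$ from Assumption~\ref{assump:link-fcn} to obtain $d_{\mathrm H}^2\gtrsim\kappa^{-2}(\Delta r_f^s-\Delta r_{f^*}^s)^2$. Substituting into Step~1 produces precisely the factor $98\kappa^2\log(\cdots)$ appearing in $\beta_{\Fc_h}(t)$ and, after specializing to the posterior mean, establishes $f^\star\in\Vc_1^\Fc$ (part~(1) of Lemma~\ref{lemma:key-idea}).

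\textbf{Step 3 (realizability identity and step decoupling).} Under generalized realizability (Assumption~\ref{assump:gen-realizability}) the truth has zero Bellman residual and $(f^\star_h-P_hf^\star_{h+1})=r_h$, so that $\Delta r_f^s-\Delta r_{f^\star}^s=\sum_h E_h^f(x_{0,h}^s)-\sum_h E_h^f(x_{1,h}^s)$. The delicate point — which I expect to be the \emph{main obstacle} — is to pass from this bound on the \emph{squared sum of two-trajectory, all-step} residuals to the target bound on \emph{single-step, single-policy} squared residuals $\sum_i\underset{\pi_i^0}{\Eb}[(E_h^{f^t})^2]$. Here I would exploit the algorithmic design $\pi_1^t=\pi_0^{t-1}$: the two compared trajectories are generated by the current and the previous greedy policies, so that after taking conditional expectations given $\Hc_{s-1}$ and telescoping across rounds, the cross terms between distinct steps and between the two trajectories are controlled and the per-step, per-policy squared terms are isolated. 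The Bayesian identity that $f^t$ and $f^\star$ are identically distributed given $\Hc^{t-1}$ (used in Step~I of the proof sketch) then lets me replace the posterior-averaged inner expectation by the sampled function via Jensen.

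\textbf{Step 4 (transfer to confidence-set members and to the empirical sum).} For any $f\in\Vc_1^\Fc$ I decompose $f_h-\Tc_hf_{h+1}=(f_h-\widehat f_h)+(\widehat f_h-\Tc_h\widehat f_{h+1})+\Tc_h(\widehat f_{h+1}-f_{h+1})$ and apply $(a+b+c)^2\le 3(a^2+b^2+c^2)$: the first term is bounded by the confidence radius $\beta_{\Fc_h}(t)$, the second by the MLE Bellman error from Steps~1--3, and the third — using that $\Tc_h$ is non-expansive, $|\Tc_hg-\Tc_hg'|\le\underset{s'}{\Eb}|g_{h+1}-g'_{h+1}|$, together with generalized completeness (Assumption~\ref{assump:gen-completeness}) — by the step-$(h+1)$ radius $\beta_{\Fc_{h+1}}(t)$; this is exactly why both $\beta_{\Fc_h}$ and $\beta_{\Fc_{h+1}}$ appear in the statement, and gives part~(1). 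Finally, part~(2) follows by converting the population sum over the distributions $\pi_i^0$ into the empirical sum over $\Dc_h^{t-1}$ through a Freedman-type martingale inequality, since the observed state-action pairs are fresh draws from $\pi_i^0$ conditioned on the history; this last step is routine.
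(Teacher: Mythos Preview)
Your Steps~1--2 (the MLE/Hellinger machinery) and the population-to-empirical transfer at the end are fine and match the paper. The gap is in Steps~3--4. In Step~3 you correctly flag the obstacle---going from a bound on the squared \emph{all-step, two-trajectory} difference $\big(\sum_h E_h^f(x_{0,h}^s)-\sum_h E_h^f(x_{1,h}^s)\big)^2$ to per-step, per-policy bounds $\sum_i\Eb_{\pi_i^0}[(E_h^{f})^2]$---but the proposed ``telescoping across rounds via $\pi_1^t=\pi_0^{t-1}$'' does not close it. Telescoping at best collapses the \emph{linear} sum $\sum_t Q_t$ of these differences, not $\sum_t Q_t^2$; and even after taking conditional expectations you are left with $\Eb\big[(\sum_h E_h)^2\big]$, with no mechanism to isolate a single step $h$. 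Step~4 then appeals to ``the MLE Bellman error from Steps~1--3'' for the middle piece $\widehat f_h-\Tc_h\widehat f_{h+1}$ of your three-way split, but Steps~1--3 never delivered a per-step Bellman-error bound for $\widehat f$, so this is circular.

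The paper does not attempt your Step~3 at all. It takes the \emph{per-step} $L^2$ bound \eqref{eqn:mle-bound-1}, namely $\sum_t\Eb_{\Hc_{t-1}}[(f_h^*(x_t)-\widehat f_h(x_t))^2]\le 4\beta_{\Fc_h}$, as the direct output of the confidence-set/MLE machinery (Lemma~\ref{lemma:version-1}), and then pivots the decomposition on $f^*$ rather than on $\widehat f$. Because $f^*$ is Bellman-consistent ($f_h^*=\Tc_hf_{h+1}^*$), one gets the \emph{two}-term split
\[
f_h-\Tc_hf_{h+1}\;=\;(f_h-f_h^*)\;+\;(\Tc_hf_{h+1}^*-\Tc_hf_{h+1}),
\]
with no uncontrolled middle Bellman residual. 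After $(a+b)^2\le 2(a^2+b^2)$, the first piece is the step-$h$ radius and the second, by non-expansiveness of $\Tc_h$, reduces to $\Eb_{s'\sim P_h(\cdot\mid x)}[(f_{h+1}^*-f_{h+1})^2]$, i.e.\ the step-$(h+1)$ radius; this is exactly why $\beta_{\Fc_h}+\beta_{\Fc_{h+1}}$ appears. Part~(2) then follows from part~(1) via Lemma~\ref{lemma:supp-1}, as you anticipated.
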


The second lemma guarantee that the underlying action value function $f^*$ lies in the confidence set with high probability. Consequently, the action value function $f^t$ selected in round $t$ shall be an upper bound of $Q^*$ with high probability.

\begin{lemma}[Informal]\label{lemma:concentration-3}
Under Assumption~\ref{assump:binary-para} holds, with probability at least $1-\delta$ it holds that $f^*\in\Vc_1^\Fc$ for all $t\in[T]$.
\end{lemma}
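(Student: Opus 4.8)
The plan is to certify the membership $f^*\in\Vc_1^\Fc$ directly from a maximum-likelihood generalization bound applied to the preference likelihood, with generalized realizability (Assumption~\ref{assump:gen-realizability}) guaranteeing that the data-generating model lies in the class over which the MLE $\widehat{f}$ is computed. First I would set up the per-episode conditional likelihood $p_f(o\mid\tau_0,\tau_1)=o\,\Phi(r_f(\tau_1)-r_f(\tau_0))+(1-o)\,\Phi(r_f(\tau_0)-r_f(\tau_1))$ as a conditional density over the binary outcome $o$ indexed by $f\in\Fc$, where $r_f(\tau)=\sum_{h}(f_h-P_hf_{h+1})(s_h,a_h)$. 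Because the covering argument must handle an infinite class, I would pass to an $\omega$-bracketing cover of each $\Fc_h$ at resolution $\omega=\bar{\kappa}(2T)^{-1}$, which is exactly the resolution appearing in $\beta_{\Fc_h}(T)$; the upper bound $\Phi'\le\bar{\kappa}^{-1}$ from Assumption~\ref{assump:link-fcn} makes the induced likelihoods $\bar{\kappa}^{-1}$-Lipschitz in $r_f$, so a bracket on $f$ yields a bracket on $p_f$ of comparable width, which is why $\bar{\kappa}$ enters only logarithmically. The standard MLE deviation inequality for sequentially generated data then yields, with probability at least $1-\delta$ and simultaneously for all $t\in[T]$, a cumulative squared-Hellinger bound $\sum_{s=1}^{t-1}\Eb_{\Hc_{s-1}}\big[D_{\mathrm{H}}^2(p_{\widehat{f}}(\cdot\mid\tau_0^s,\tau_1^s),\,p_{f^*}(\cdot\mid\tau_0^s,\tau_1^s))\big]\le \mathcal{O}(\log(HN_{[]}/\delta))$.

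The next step is to translate this divergence bound into the quadratic form defining $\Vc_1^\Fc$ in~(\ref{eqn:version-1}). Since $p_f$ is a Bernoulli law, $D_{\mathrm{H}}^2\ge \tfrac12\mathrm{TV}^2$ and $\mathrm{TV}(\mathrm{Ber}(\Phi(a)),\mathrm{Ber}(\Phi(b)))=|\Phi(a)-\Phi(b)|$; invoking $\Phi'\ge\kappa^{-1}$ on $[-H,H]$ (where Assumption~\ref{assump:binary-para} keeps the reward differences bounded) gives $D_{\mathrm{H}}^2\gtrsim \kappa^{-2}\big((r_{\widehat{f}}-r_{f^*})(\tau_1)-(r_{\widehat{f}}-r_{f^*})(\tau_0)\big)^2$, which is precisely the origin of the $\kappa^2$ prefactor in $\beta_{\Fc_h}(T)$. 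Writing $g_h=\big(\widehat{f}_h-P_h\widehat{f}_{h+1}\big)-\big(f^*_h-P_hf^*_{h+1}\big)$ and using that $f^*=Q^*$ has Bellman residual equal to the true reward, the bracketed quantity is $g(\tau_1)-g(\tau_0)=\sum_h\big(g_h(s_h^1,a_h^1)-g_h(s_h^0,a_h^0)\big)$ along the two sampled trajectories, so the Hellinger bound controls the cumulative squared trajectory-level discrepancy. I would then decouple across the horizon to obtain $\sum_{s}\Eb_{\Hc_{s-1}}\Eb_{f^*_h}[(f^*_h(x_t)-\widehat{f}_h(x_t))^2]\le 4\beta_{\Fc_h}(T)$, which is the defining inequality of $\Vc_1^\Fc$; the factor $4$ absorbs the $\kappa^2$, the $\tfrac12$ from $D_{\mathrm{H}}^2\ge\tfrac12\mathrm{TV}^2$, and the per-step accounting, yielding $f^*\in\Vc_1^\Fc$.

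The main obstacle will be the decoupling in the last step: the MLE bound naturally controls the \emph{differenced, trajectory-summed} quantity $g(\tau_1)-g(\tau_0)$, whereas membership in $\Vc_1^\Fc$ requires a bound on the individual per-step squares $(f^*_h-\widehat{f}_h)^2$. Extracting single-step, single-trajectory control from a differenced sum is delicate because the reward model is identifiable only up to trajectory-independent shifts; I expect to exploit the algorithmic design that pairs the current greedy rollout $\tau^t_0$ with the previous-round rollout $\tau^t_1$, together with the realizability of $f^*$, so that the cross terms in expanding the square cancel in conditional expectation over $\Hc_{s-1}$. A secondary technical point is that $f^*$ is itself a posterior-distributed object in the Bayesian formulation, so the conversions must be carried out inside the inner expectation $\Eb_{f^*_h}$ and then averaged; I would justify this by conditioning on $\Hc_{s-1}$ and using that $\widehat{f}$ is $\Hc_{s-1}$-measurable.
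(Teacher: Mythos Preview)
The paper does not prove this statement from first principles; the formal version (Lemma~\ref{lemma:version-1}) is attributed to \cite{Wenhao_Zhan:RLHF:2024}, and the engine is the Bayesian MLE generalization bound stated as Lemma~\ref{lemma:mle-gen} (Lemma~C.5 of \cite{Runzhe:RLHF:2023}), which directly delivers $\sum_t\Eb_{\Hc_{t-1}}\big[\Eb_{g,g'}[(g(x_t)-g'(x_t))^2\mid\Hc_{t-1}]\big]\le 4\gamma_\Gc(T)$ for two \emph{posterior} draws $g,g'$. Your Hellinger-to-TV-to-square route with the $\kappa^{-1}$ lower bound on $\Phi'$ is essentially what lives inside that black box, so the first half of your plan is aligned with the paper's dependency chain.

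The substantive divergence is the Bayesian mechanism, which you relegate to a ``secondary technical point'' but which is in fact the entire argument in the paper's framing. Conditional on $\Hc_{t-1}$, the true model $f^*$ is itself posterior-distributed (this is the defining feature of the Bayesian setup), and the paper identifies the algorithm's estimator $\widehat f$ with a posterior draw as well. Hence $f^*$ and $\widehat f$ play the roles of $g,g'$ in Lemma~\ref{lemma:mle-gen}, and the defining inequality of $\Vc_1^\Fc$ in (\ref{eqn:app:version-1}) is read off directly; no separate ``MLE concentrates on a fixed truth'' step is carried out. Your frequentist framing would have to be recast in these terms.

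On the decoupling obstacle you flag: the paper sidesteps it by instantiating the abstract online-prediction setup of Lemma~\ref{lemma:mle-gen} \emph{per step}, taking the parameterizing class to be $\Fc_h$ and the instance $x_t=(s_h^t,a_h^t)$, so the trajectory-to-step passage is absorbed into the cited result rather than worked out explicitly here. Your proposed resolution---that the cross term in $(g(\tau_1)-g(\tau_0))^2$ vanishes in conditional expectation because $\tau_0^t$ and $\tau_1^t$ come from the paired policies $\pi_0^t,\pi_0^{t-1}$---does not go through: given $\Hc_{t-1}$ the two fresh rollouts are conditionally independent, so the cross term factors as $\Eb[g(\tau_1)\mid\Hc_{t-1}]\,\Eb[g(\tau_0)\mid\Hc_{t-1}]$, which has no reason to be zero; and even granting that, you would still have to disentangle the sum over $h$ inside each trajectory reward.
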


Although Lemma~\ref{lemma:concentration-1} and Lemma~\ref{lemma:concentration-3} are similar to Lemma~39 and Lemma~40 in~\cite{Chi_Jin:RL_eluder:2021}, the proofs are completely different. Thanks to the Thompson sampling, we first obtain MLE generalization bound for the MLE estimator. Next, we construct a confidence set based on such generalization bound and establish the concentration-type inequalities. 

The formal statement of Lemma~\ref{lemma:concentration-3} and Lemma~\ref{lemma:concentration-1} are provided in Lemma~\ref{app:version-space-2} and Lemma~\ref{lemma:concentration-2}.

\subsection{MLE generalization bound}\label{app:MLE}
The setup of Bayesian online conditional probability estimation problem was first proposed in~\cite{Runzhe:RLHF:2023}. For completeness, we briefly introduce the problem below.

Let $\mathcal{X}$ and $\mathcal{Y}$ be the instance space and target space. Let $\Fc:\mathcal{X}\times\mathcal{Y}\to \Rb$ be a function class. Consider $T$-round interaction. At the beginning of round $t\in[T]$, we observe an instance $x_t\in\mathcal{X}$ and aim to predict $f_t\in\Fc$. Then, the label $y_t\in\mathcal{Y}$ is revealed. Here $x_i\sim\Dc_i$ for some history-dependent data distribution $\Dc_i$, and $y_i\sim f^*(x,\cdot)$. Further, $f^*$ is sample from some known prior distribution $\rho\in\Delta(\Fc)$. 

Let $\mathcal{H}_t=\{x_1,f_1,y_1,x_2,f_2,y_2,\dots.x_t,f_t,y_t\}$ be the history up to round $t$. Define the MLE estimator over dataset $\{(x_s,y_s)\}_{s\in[t-1]}$ as
\begin{align*}
\widehat{f}_t=\arg\max_{f\in\Fc}\sum_{s=1}^{t-1}\log f(x_s,y_s).
\end{align*}

Assume the function class $\Fc$ is parameterized by a function class $\Gc$ via a link function $\Phi$. The rationale is to capture the structure of $\Fc$ in the learning problem in which the preference feedback is parameterized by a function approximation $f$ generated via a link function $\Phi$.

\begin{assumption}\label{assump:binary-para}
Assume $\mathcal{Y}=\{0,1\}$ is binary and function class $\Gc\subseteq \mathcal{X}\to [0,G]$ that parameterizes $\Fc$ via a link function $\Phi$. Specifically, we assume
\begin{align*}
\Fc=\{f(x,0)=\Phi(g(x)),f(x,1)=1-\Phi(g(x)):g\in\Gc\},
\end{align*}
where we further assume $\Phi$ satisfies Assumption~\ref{assump:link-fcn}. 
\end{assumption}

\begin{lemma}[MLE generalization bound (Lemma~C.5 in \cite{Runzhe:RLHF:2023})]\label{lemma:mle-gen}
If Assumption~\ref{assump:binary-para} holds, then with probability at least $1-\delta$, it holds that
\begin{align*}
\sum_{t=1}^T\underset{\Hc_{t-1}}{\Eb}\left[\underset{g,g'}{\Eb}\left[(g(x_t)-g'(x_t))^2\middle|\Hc_{t-1}\right]\right]\leq 4\gamma_\Gc(T)  
\end{align*}
where $\gamma_\Gc(t)=98\kappa^2\log(2GN_{[]}(\bar{\kappa}(t|\mathcal{Y}|)^{-1},\Gc,\norm{\cdot}_\infty)/\delta)$, and $g,g'$ are sampled from the posterior of $g^*$ conditioning on $\Hc_{t-1}$ in the inner expectation.
\end{lemma}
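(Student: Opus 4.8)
The plan is to reduce the posterior ``disagreement'' quantity $\Eb_{g,g'}[(g(x_t)-g'(x_t))^2]$ to a cumulative \emph{squared Hellinger distance} between each posterior sample and the true conditional law, and then to control that Hellinger sum by an exponential-supermartingale argument combined with bracketing. Write $f_g$ for the element of $\Fc$ induced by $g$ through the link, i.e. $f_g(x,0)=\Phi(g(x))$ and $f_g(x,1)=1-\Phi(g(x))$, and let $D_{\mathrm{H}}^2$ denote squared Hellinger distance. First I would pass from the squared difference of the $g$'s to a divergence between the induced laws. Under Assumption~\ref{assump:link-fcn} one has $\Phi'\ge\kappa^{-1}$, so for binary $\mathcal{Y}$ the total variation obeys $\mathrm{TV}(f_g(x,\cdot),f_{g'}(x,\cdot))=|\Phi(g(x))-\Phi(g'(x))|\ge\kappa^{-1}|g(x)-g'(x)|$, whence $(g(x)-g'(x))^2\le\kappa^2\,\mathrm{TV}^2\le 2\kappa^2 D_{\mathrm{H}}^2(f_g(x,\cdot),f_{g'}(x,\cdot))$. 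This is exactly the step that produces the factor $\kappa^2$ in $\gamma_\Gc$. Since $g$ and $g'$ have the same posterior marginal given $\Hc_{t-1}$, and since (by the Bayesian property already used in Step~I) the truth $g^*$ shares that marginal, the triangle inequality for the Hellinger metric yields $\Eb_{g,g'}D_{\mathrm{H}}^2(f_g,f_{g'})\le 4\,\Eb_g D_{\mathrm{H}}^2(f_g,f^*)$. It therefore suffices to bound $\sum_t\Eb_{\Hc_{t-1}}\Eb_{g\sim p(\cdot\mid\Hc_{t-1})}D_{\mathrm{H}}^2(f_g(x_t,\cdot),f^*(x_t,\cdot))$.

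For this core bound I would build, round by round, a nonnegative process out of square-root likelihood ratios. For a \emph{fixed} $g$, set $M_t(g)=\prod_{s\le t}\sqrt{f_g(x_s,y_s)/f^*(x_s,y_s)}$ and use the identity $\Eb_{y\sim f^*(x,\cdot)}\sqrt{f_g(x,y)/f^*(x,y)}=1-D_{\mathrm{H}}^2(f_g(x,\cdot),f^*(x,\cdot))\le\exp(-D_{\mathrm{H}}^2)$; one then checks that $M_t(g)\exp\big(\sum_{s\le t}\bar{d}_s(g)\big)$ is a supermartingale of initial value $1$, where $\bar{d}_s(g)=\Eb_{x_s\mid\Hc_{s-1}}D_{\mathrm{H}}^2(f_g(x_s,\cdot),f^*(x_s,\cdot))$. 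The subtlety is that I need the \emph{posterior-averaged} Hellinger, not the estimate for one fixed $g$. To obtain the average I would exploit the Bayesian chain rule $\prod_s\Eb_{g\sim p(\cdot\mid\Hc_{s-1})}f_g(x_s,y_s)=\Eb_{g\sim p_0}\prod_s f_g(x_s,y_s)$: integrating the per-function estimate against the posterior, the data-dependent log-likelihood-ratio term $\log(1/M_T(g))=\tfrac12\sum_s\log\big(f^*(x_s,y_s)/f_g(x_s,y_s)\big)$ is precisely compensated by the posterior weights, which downweight exactly those $g$ with small likelihood, leaving $\sum_t\Eb_g\bar{d}_t(g)$ controlled.

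To pass from a single function to the whole class $\Gc$ I would discretize with an $\omega$-bracketing net at scale $\omega=\bar\kappa/(T|\mathcal{Y}|)$, which is the scale appearing inside $\gamma_\Gc$. Because $\Phi'\le\bar\kappa^{-1}$, an $\omega$-bracket in $g$ (in $\norm{\cdot}_\infty$) controls the induced likelihoods $f_g$ to $O(\omega/\bar\kappa)$, so replacing each $g$ by its bracketing surrogate perturbs each round's log-ratio by $O(1/T)$ and hence the whole sum by only $O(1)$; this is what forces the particular radius $\bar\kappa(T|\mathcal{Y}|)^{-1}$. A union bound over the $\Nc_{[]}(\omega,\Gc,\norm{\cdot}_\infty)$ brackets, followed by Ville's maximal inequality applied to the supermartingale so the estimate holds for all $t\in[T]$ at once, produces the term $\log\!\big(\Nc_{[]}/\delta\big)$. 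Collecting the constants from the link-function reduction and the discretization then gives the stated $4\gamma_\Gc(T)$ with $\gamma_\Gc(t)=98\kappa^2\log\!\big(2G\,\Nc_{[]}(\bar\kappa(t|\mathcal{Y}|)^{-1},\Gc,\norm{\cdot}_\infty)/\delta\big)$.

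The hard part will be the posterior-averaging itself. A textbook MLE/Hellinger bound controls $D_{\mathrm{H}}^2(\widehat{f},f^*)$ for a single estimator, whereas here I must control $\Eb_{g\sim p(\cdot\mid\Hc_{t-1})}D_{\mathrm{H}}^2(f_g,f^*)$ with the average taken under the very posterior that also defines the data-dependent reweighting. Making the prior-averaged square-root supermartingale interact correctly with the Bayesian update, so that the log-likelihood-ratio terms are absorbed by the posterior weights rather than left uncontrolled, is the crux of the argument; by contrast the bracketing reduction and the maximal inequality are comparatively routine once this averaging is in place. A secondary point requiring care is verifying that the $\norm{\cdot}_\infty$-bracketing of $\Gc$, together with the two-sided link-Lipschitz bounds, keeps the discretization error at the $1/T$ level \emph{uniformly} over the $T$ rounds, which is what justifies summing those errors to a single $O(1)$ contribution.
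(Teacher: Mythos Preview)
The paper does not supply its own proof of this lemma: it is quoted verbatim as Lemma~C.5 of \cite{Runzhe:RLHF:2023} and used as a black box. So there is no in-paper argument to compare against; your sketch is therefore being judged on its own merits and against the standard route such results take in the literature.

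Your plan is essentially the right one and matches the way this type of bound is established. The three moves---(i) pass from $(g-g')^2$ to $D_{\mathrm{H}}^2$ via the lower Lipschitz bound $\Phi'\ge\kappa^{-1}$ on the link, (ii) collapse the two posterior draws to a single posterior-vs-truth term by exchangeability of $g$ and $g^*$ given $\Hc_{t-1}$ plus the Hellinger triangle inequality, (iii) control the cumulative posterior-averaged Hellinger by a square-root-likelihood supermartingale, discretized via bracketing at scale $\bar\kappa/(T|\mathcal{Y}|)$---are exactly the ingredients of the cited result. The role you assign to $\bar\kappa$ (only in the bracketing radius, hence only logarithmically in the bound) and to $\kappa$ (quadratically, from the TV-to-$(g-g')^2$ step) is correct and matches the form of $\gamma_\Gc$.

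One point to tighten: in step (ii) you invoke ``$g^*$ shares that marginal'' to justify replacing $\Eb_{g,g'}D_{\mathrm{H}}^2(f_g,f_{g'})$ by $4\,\Eb_g D_{\mathrm{H}}^2(f_g,f^*)$. The triangle inequality alone already gives this with $f^*$ an arbitrary fixed reference; the Bayesian exchangeability is what lets you, in step (iii), treat the posterior-averaged log-likelihood ratio as a prior-averaged quantity via the chain rule. Make sure you do not conflate the two uses. The ``hard part'' you flag---getting the posterior weights to absorb the data-dependent $\log(1/M_T(g))$---is indeed where the work lies, and the cleanest way to execute it is to work directly with the prior-averaged process $\Eb_{g\sim p_0}M_T(g)$, which is itself a nonnegative supermartingale, and then relate it to the posterior expectation through the normalizing constant; this avoids having to track the reweighting round by round.
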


\subsection{Confidence set and squared Bellman error}\label{app:version-space-2}
Let the function class considered in Appendix~\ref{app:MLE} be the general function approximation classes $\Fc_1,\ldots,\Fc_H$. We define the confidence set as
\begin{align}
\Vc_1^\Fc=\left\{(f_1,\ldots,f_H)\in\Fc:\sum_{t=1}^T\underset{\Hc_{t-1}}{\Eb}\left[\underset{f_h}{\Eb}\left[(f_h(x_t)-\widehat{f}_h(x_t))^2\middle|\Hc_{t-1}\right]\right]\leq \beta_{\Fc_h}(T),\forall h\right\},  \label{eqn:app:version-1}
\end{align}
where $\beta_{\Fc_h}(t)=98\kappa^2\log(2HN_{[]}(\bar{\kappa}(2t)^{-1},\Fc_h,\norm{\cdot}_\infty)/\delta)$, $\widehat{f}_h$ is the maximum likelihood estimator at step $h$, $f_h$ is sampled from the posterior of $f_{h}^*$ conditioning on $\Hc_{t-1}$ in the inner expectation, and $x_t=(s_h,a_h)$ is the $t$-th round state-action pair at step $h$.

\begin{lemma}[\cite{Wenhao_Zhan:RLHF:2024}]\label{lemma:version-1}
Under Assumption~\ref{assump:binary-para}, define confidence set $\Vc_1^\Fc$ in (\ref{eqn:app:version-1}). Then the following statements holds:
\begin{itemize}
\item[(1)] $f^*\in\Vc^{\Fc}$ with probability at least $1-\delta$.
\item[(2)] For any $f,f'\in\Vc^{\Fc}$ where $f=(f_1,f_2,\ldots,f_H)$ and $f'=(f'_1,f'_2,\ldots,f'_H)$, we have $\sum_{t=1}^T\underset{\Hc_{t-1}}{\Eb}\left[\underset{f_h}{\Eb}\left[(f_h(x_t)-\widehat{f}_h(x_t))^2\middle|\Hc_{t-1}\right]\right]\leq 4\beta_{\Fc_h}(T)$. with probability at least $1-\delta$.
\end{itemize}
\end{lemma}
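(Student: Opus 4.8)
The plan is to derive both claims directly from the MLE generalization bound of Lemma~\ref{lemma:mle-gen}, applied one step at a time, using the Bayesian identity that ``$f^*$ given the history is a posterior draw'' for part~(1) and a triangle inequality for part~(2).

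First I would fix a step $h\in[H]$ and recognize the estimation of the step-$h$ Bellman residual as an instance of the Bayesian online conditional probability estimation problem of Appendix~\ref{app:MLE}: the instances are the visited pairs $x_t=(s_h^t,a_h^t)$, the binary labels are the preference outcomes, and the underlying scalar function is parameterized through the link function $\Phi$ exactly as in Assumption~\ref{assump:binary-para}, with $\widehat{f}_h$ serving as the MLE. Invoking Lemma~\ref{lemma:mle-gen} with $|\mathcal{Y}|=2$ and identifying $\gamma_\Gc(T)$ with $\beta_{\Fc_h}(T)=98\kappa^2\log(2HN_{[]}(\bar{\kappa}(2T)^{-1},\Fc_h,\norm{\cdot}_\infty)/\delta)$ bounds, with probability at least $1-\delta$, the posterior-averaged squared deviation of a posterior draw from $\widehat{f}_h$ (a routine specialization of Lemma~\ref{lemma:mle-gen} in which the fixed comparator $\widehat{f}_h$ replaces one of the two posterior draws). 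The point for part~(1) is that, conditioned on $\Hc_{t-1}$, the true $f^*_h$ is itself distributed according to the posterior, so this bound is exactly the inequality defining $\Vc_1^\Fc$ in (\ref{eqn:app:version-1}); hence $f^*\in\Vc_1^\Fc$. The union bound over $h\in[H]$ is already absorbed into the factor $H$ inside the bracketing number.

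For part~(2) I would argue geometrically in the history-averaged seminorm $\norm{g}_h^2:=\sum_{t=1}^T\Eb_{\Hc_{t-1}}[\Eb[g(x_t)^2\mid\Hc_{t-1}]]$, with the inner expectation over the draw of $x_t$, under which membership in $\Vc_1^\Fc$ reads $\norm{f_h-\widehat{f}_h}_h\le\sqrt{\beta_{\Fc_h}(T)}$. For any $f,f'\in\Vc_1^\Fc$, Minkowski's inequality gives $\norm{f_h-f'_h}_h\le\norm{f_h-\widehat{f}_h}_h+\norm{\widehat{f}_h-f'_h}_h\le 2\sqrt{\beta_{\Fc_h}(T)}$, and squaring produces the stated factor~$4$; this pairwise form is the nontrivial content, the single-reference bound being immediate from the definition.

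The step I expect to be the main obstacle is the per-step localization hidden in the first paragraph: the preference likelihood is a function of the \emph{trajectory} reward $r(\tau)=\sum_{h=1}^H(f_h-P_hf_{h+1})(s_h,a_h)$, so the MLE is genuinely a joint estimator across all $H$ steps, whereas Lemma~\ref{lemma:mle-gen} is phrased for a single scalar function class. Making the reduction rigorous requires the curvature bound $\kappa^{-1}\le\Phi'\le\bar{\kappa}^{-1}$ of Assumption~\ref{assump:link-fcn} to convert the scalar log-likelihood gap into a sum of per-step squared residuals, so that the single generalization bound transfers to each coordinate $h$ with the stated $\beta_{\Fc_h}(T)$. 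The remaining bookkeeping---the $|\mathcal{Y}|=2$ factor, the replacement of $P_h$ by the empirical $\widehat{P}_h$, and the union bound over $h$---I expect to be routine.
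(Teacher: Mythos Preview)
The paper does not prove this lemma at all: it is imported from \cite{Wenhao_Zhan:RLHF:2024} (note the citation in the lemma header), and the only proof-adjacent comment is the sentence immediately after the statement explaining that the extra factor $H$ inside $\beta_{\Fc_h}$ comes from a union bound over $h\in[H]$. So there is no in-paper argument to compare against.

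Your sketch is the right shape. Part~(1) via ``$f^*$ conditioned on $\Hc_{t-1}$ is a posterior draw'' together with Lemma~\ref{lemma:mle-gen}, and part~(2) via Minkowski around the common center $\widehat f_h$, are exactly the mechanisms one expects. You also correctly flag the genuine difficulty: the preference likelihood depends on the whole trajectory reward, so Lemma~\ref{lemma:mle-gen}, which is stated for a single scalar parameter class, does not immediately deliver a per-$h$ squared bound; extracting one requires the curvature control $\kappa^{-1}\le\Phi'\le\bar\kappa^{-1}$ of Assumption~\ref{assump:link-fcn}. That localization is the substance of the cited result, not something you can wave through as ``routine specialization.''

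One caution on your reading of (\ref{eqn:app:version-1}) and of part~(2). Your Minkowski argument presumes the membership condition is the deterministic bound $\sum_t\Eb_{\Hc_{t-1}}[(f_h(x_t)-\widehat f_h(x_t))^2\mid\Hc_{t-1}]\le\beta_{\Fc_h}(T)$ for the specific $f_h$ in the tuple. The paper's text, however, says ``$f_h$ is sampled from the posterior \ldots\ in the inner expectation,'' which, taken literally, would make the set condition independent of the tuple and hence vacuous. Your interpretation is the only sensible one and is the one the paper actually uses downstream (e.g., in Lemma~\ref{lemma:concentration-2}), but you are silently repairing the statement; say so. Likewise, part~(2) as printed still compares $f_h$ to $\widehat f_h$, not to $f'_h$ --- under your reading that inequality is immediate from the definition with constant~$1$, and the factor~$4$ only matters for the pairwise $f$ vs.\ $f'$ bound you actually prove.
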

Here $\beta_{\Fc_h}(\cdot)$ is slightly different from $\gamma_{\Fc_h}(\cdot)$ in that there is an additional $H$ factor in the $\log$-term, which is due to the union bound over horizon $H$. Despite the we claim the two statements for the total round $T$, clearly they hold for each round $t\in[T]$.

We point it out that $f$ obtained by greedy selection in the algorithm plays the role of MLE maximizer $\widehat{f}$. Since $f^*\in\Vc^\Fc$ with high probability, we therefore can bound term 
\begin{align}
\sum_{t=1}^T\underset{\Hc_{t-1}}{\Eb}\left[(f_h^*(x_t)-\widehat{f}_h(x_t))^2\middle|\Hc_{t-1}\right]\leq 4\beta_{\Fc_h}(T). \label{eqn:mle-bound-1}
\end{align}

Recall that $\widehat{f}$ is exactly the MLE maximizer in the algorithm (we use $f$ instead of $\hat{f}$ in the algorithm). To this end, we are ready to show Lemma~\ref{lemma:concentration-2}, a stronger version of Lemma~\ref{lemma:concentration-1}.

\if{0}
The second version space is inspired by the Bellman eluder dimension~\cite{Chi_Jin:RL_eluder:2021}, which serves as the cornerstone in the regret analysis. Formally, we define the second version space $\Vc_2^\Fc$ as
\begin{align}
\Vc_2^\Fc=\left\{(f_1,\ldots,f_H)\in\Fc:\Lc_{\Dc_h}(f_h,\widehat{f}_{h+1})\leq \inf_{g\in\Fc_h }\Lc_{\Dc_h}(g,\widehat{f}_{h+1})+\alpha_h \right\},
\label{eqn:app:version-2}
\end{align}
where $\Dc_h$ is the dataset at step $h$ consisting both state-action pairs $(s_{t,h}^0,a_{t,h}^0)_t$ and $(s_{t,h}^1,a_{t,h}^1)_t$, $\Lc_{\Dc_h}(\xi_h,\zeta_{h+1})=\sum_{x_t\in\Dc_h}\left(\xi_h(x_t)-(\Tc_h\zeta_{h+1})(x_t)\right)^2$, and $\alpha_h=$.

To build up the connection between $\Vc_1^\Fc$ and $\Vc_2^\Fc$, we first construct an auxiliary version space $\widetilde{\Vc}_2^\Fc$ as follows
\begin{align}
\widetilde{\Vc}_2^\Fc=\left\{(f_1,\ldots,f_H)\in\Fc:\Lc_{\Dc_h}(f_h,f^*_{h+1})\leq \inf_{g\in\Fc_h }\Lc_{\Dc_h}(g,f^*_{h+1})+\widetilde{\alpha}_h \right\},
\label{eqn:app:version-3}
\end{align}
where $\widetilde{\alpha}_h=c\log(TH\Nc_{[]}(1/K,\Fc_h,\norm{\cdot}_\infty)/\delta)$ for some absolute constant $c$. In genie-aided version space $\widetilde{\Vc}_2^\Fc$, the underlying true action value function $f^*$ is adopted rather than the MLE maximizer $\widehat{f}_{h+1}$, and the auxiliary version space $\Vc_2^\Fc$ is generated independently given the dataset and has nothing to do with our proposed algorithm. 
\begin{lemma}
Define version space $\Vc_2^\Fc$ in (\ref{eqn:app:version-3}), then the following statement holds:
\begin{itemize}
\item[(1)] $f^*\in\Vc^\Fc$ with probability at least $1-\delta$.
\item[(2)] For $f^k\in\argmax_{f\in\Vc_h}f_1(s_1,\pi_f(s_1))$, it holds that $\sum_{x\in\Dc_h}(f_h^k(x)-(\Tc_hf_{h+1}^k)(x))^2?\leq \mathcal{O}(\alpha_h)$.
\end{itemize}
\end{lemma}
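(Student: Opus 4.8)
The plan is to run the classical least-squares value-iteration confidence-set argument (the analogue of Lemmas~39--40 in~\cite{Chi_Jin:RL_eluder:2021}), but routed through the genie-aided version space $\widetilde{\Vc}_2^\Fc$ of~(\ref{eqn:app:version-3}), so that the \emph{data-dependent} backup target $\widehat{f}_{h+1}$ can be replaced by the fixed truth $f^*_{h+1}$ whenever a concentration/union-bound step is needed. Throughout, completeness (Assumption~\ref{assump:gen-completeness}) guarantees $\Tc_h\zeta_{h+1}\in\Fc_h$ for every $\zeta_{h+1}\in\Fc_{h+1}$, so the infimum $\inf_{g\in\Fc_h}\Lc_{\Dc_h}(g,\zeta_{h+1})$ is attained near $\Tc_h\zeta_{h+1}$ up to regression noise, while realizability (Assumption~\ref{assump:gen-realizability}) gives $f^*_h=\Tc_h f^*_{h+1}$.

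\textbf{Part (1): validity.} First I would show $f^*\in\widetilde{\Vc}_2^\Fc$: since $f^*_h=\Tc_h f^*_{h+1}$, the function $f^*_h$ is the population minimizer of the regression loss with fixed target generated by $f^*_{h+1}$, so the empirical excess risk $\Lc_{\Dc_h}(f^*_h,f^*_{h+1})-\inf_{g}\Lc_{\Dc_h}(g,f^*_{h+1})$ is a sum of mean-zero martingale increments. A Freedman/Bernstein inequality together with a union bound over an $\omega$-bracketing net of $\Fc_h$ (with $\omega=1/K$) bounds it by $\widetilde{\alpha}_h=c\log(TH\Nc_{[]}(1/K,\Fc_h,\norm{\cdot}_\infty)/\delta)$, exactly the slack in~(\ref{eqn:app:version-3}); a union bound over $h$ gives $f^*\in\widetilde{\Vc}_2^\Fc$ with probability $\geq 1-\delta$. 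To transfer this to $\Vc_2^\Fc$ (whose target is $\widehat{f}_{h+1}$), I would write $f^*_h-\Tc_h\widehat{f}_{h+1}=\Tc_h f^*_{h+1}-\Tc_h\widehat{f}_{h+1}$ and bound its in-sample $\ell_2$ norm using the non-expansiveness of $\Tc_h$ in $\norm{\cdot}_\infty$ together with the in-sample closeness $\sum_{x}(\widehat{f}_{h+1}-f^*_{h+1})^2(x)=\mathcal{O}(\beta_{\Fc_{h+1}})$ delivered by the MLE generalization bound (Lemma~\ref{lemma:mle-gen}) and the $\Vc_1^\Fc$ analysis (Lemma~\ref{lemma:version-1}), then absorb it into $\alpha_h$.

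\textbf{Part (2): small in-sample Bellman error.} For the optimistically selected $f^k$, note $\Tc_h\widehat{f}_{h+1}\in\Fc_h$ by completeness. The engine is the excess-risk identity
\begin{align*}
\Lc_{\Dc_h}(f^k_h,\widehat{f}_{h+1})-\Lc_{\Dc_h}(\Tc_h\widehat{f}_{h+1},\widehat{f}_{h+1})=\sum_{x\in\Dc_h}\big(f^k_h(x)-\Tc_h\widehat{f}_{h+1}(x)\big)^2+2\sum_{x\in\Dc_h}\big(f^k_h(x)-\Tc_h\widehat{f}_{h+1}(x)\big)\xi_x,
\end{align*}
with $\xi_x$ the mean-zero regression noise relative to the backup of $\widehat{f}_{h+1}$. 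The left-hand side is $\leq\alpha_h$ because $f^k\in\Vc_2^\Fc$ and $\Tc_h\widehat{f}_{h+1}$ is feasible for the infimum; a uniform martingale bound on the cross term (validated by swapping $\widehat{f}_{h+1}$ for the genie $f^*_{h+1}$), self-bounded by $\tfrac12\sum_x(f^k_h-\Tc_h\widehat{f}_{h+1})^2(x)+\mathcal{O}(\alpha_h)$, then yields $\sum_{x\in\Dc_h}(f^k_h-\Tc_h\widehat{f}_{h+1})^2(x)\leq\mathcal{O}(\alpha_h)$. Finally the triangle inequality $\sum_x(f^k_h-\Tc_h f^k_{h+1})^2\leq 2\sum_x(f^k_h-\Tc_h\widehat{f}_{h+1})^2+2\sum_x(\Tc_h\widehat{f}_{h+1}-\Tc_h f^k_{h+1})^2$ converts the $\widehat{f}_{h+1}$-backup to the own-component backup $f^k_{h+1}$, where the last sum is controlled through the contraction of $\Tc_h$ and the MLE closeness of $\widehat{f}_{h+1}$ and $f^k_{h+1}$ on the data, giving $\sum_{x\in\Dc_h}(f^k_h-\Tc_h f^k_{h+1})^2(x)\leq\mathcal{O}(\alpha_h)$.

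\textbf{Main obstacle.} The genuine difficulty, absent from the textbook GOLF argument, is that the backup target $\widehat{f}_{h+1}$ in $\Vc_2^\Fc$ is random and data-dependent, so one cannot directly union-bound the concentration over a fixed net while the target co-moves with the samples. My plan circumvents this precisely by the detour through $\widetilde{\Vc}_2^\Fc$ with the deterministic target $f^*_{h+1}$, paying for the swap with the in-sample distance $\sum_x(\widehat{f}_{h+1}-f^*_{h+1})^2(x)=\mathcal{O}(\beta_{\Fc_{h+1}})$ supplied by the MLE generalization bound. Getting the self-bounding constant in the cross-term concentration to close (so the $\tfrac12\sum_x(\cdot)^2$ term is absorbable) and propagating the next-step error through $\Tc_h$ without a horizon blow-up are the two technical points needing the most care.
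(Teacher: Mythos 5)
Your proposal follows essentially the route the paper intends: the paper in fact gives no detailed proof of this lemma, stating only that ``the proof is similar to that of Lemma~39 and Lemma~40 in \cite{Chi_Jin:RL_eluder:2021}'' with the covering number replaced by the bracketing number via $\Nc(\epsilon,\Fc,\norm{\cdot})\leq \Nc_{[]}(2\epsilon,\Fc,\norm{\cdot})$, and the genie-aided version space $\widetilde{\Vc}_2^\Fc$ with the fixed target $f^*_{h+1}$ around which you organize your argument is exactly the device the paper itself introduces to decouple the data-dependent target $\widehat{f}_{h+1}$. Your Freedman-plus-bracketing-net argument for part (1) and the excess-risk identity with self-bounded cross term for part (2) are precisely the GOLF steps being pointed to (note the paper even leaves the threshold $\alpha_h$ blank in the definition of $\Vc_2^\Fc$, so your reconstruction is more complete than the source).

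One step of yours is stated with the wrong tool and would fail as written: you propose to bound the in-sample quantity $\sum_{x\in\Dc_h}\bigl((\Tc_h f^*_{h+1})(x)-(\Tc_h\widehat{f}_{h+1})(x)\bigr)^2$ ``using the non-expansiveness of $\Tc_h$ in $\norm{\cdot}_\infty$.'' Sup-norm non-expansiveness gives nothing here, because the only control available on $\widehat{f}_{h+1}-f^*_{h+1}$ is an $\ell_2$-type bound on the data (equivalently, in expectation under the roll-in distributions, via (\ref{eqn:mle-bound-1})), which yields no $\norm{\cdot}_\infty$ control whatsoever. The step that actually works --- and the one the paper uses in its proof of Lemma~\ref{lemma:concentration-2} --- is pointwise conditional Jensen: $\bigl((\Tc_h g)(x)-(\Tc_h g')(x)\bigr)^2\leq \Eb_{y\sim P_h(\cdot|x)}\bigl[\max_{a'}(g-g')^2(y,a')\bigr]$, which pushes the backup difference at step-$h$ data points onto the step-$(h+1)$ successor states; since the step-$(h+1)$ data are exactly those successors, Lemma~\ref{lemma:supp-1} converts between conditional expectations and empirical sums, and the MLE closeness closes the loop. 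The same substitution is needed in part (2) where you invoke ``the contraction of $\Tc_h$'' to convert the $\widehat{f}_{h+1}$-backup to the $f^k_{h+1}$-backup. Be aware this leaves a residual action-mismatch issue: Jensen produces a $\max_{a'}$ over actions while the MLE guarantee holds along the actions actually played by $\pi_i^0$; the paper elides this point as well (its proof of Lemma~\ref{lemma:concentration-2} simply appeals to ``the fact that $\pi_i^0$ is the policy adopted for sampling''), so a careful write-up should address it explicitly. With these repairs, your plan matches the omitted proof.
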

The proof is similar to that of Lemma~39 and Lemma~40 in~\cite{Chi_Jin:RL_eluder:2021} and omitted here for brevity. Note that we replace the covering number $\Nc(\epsilon,\Fc,\norm{\cdot})$ in the original work by the bracketing number based on the inequality $\Nc(\epsilon,\Fc,\norm{\cdot})\leq \Nc_{[]}(2\epsilon,\Fc,\norm{\cdot})$.
\fi

\begin{lemma}\label{lemma:concentration-2}
For any $(f_1,\ldots,f_h)\in\Vc_1^\Fc$, where $\Vc_1^\Fc$ is defined in (\ref{eqn:app:version-1}), with probability at least $1-\delta$, for all $f\in\Vc_1^\Fc$ and $h\in[H]$, we have
\begin{itemize}
\item[(1)] $\sum_{i=1}^{t-1}\underset{x\sim\pi_i^0}{\Eb}[(f_h(x)-(\Tc_h f_{h+1}^t)(x))^2]\leq \mathcal{O}\left(\beta_{\Fc_h}(t)+\beta_{\Fc_{h+1}}(t)\right)$.
\item[(2)] $\sum_{x\in\Dc_h^{t-1}}(f_h(x)-(\Tc_h f_{h+1})(x))^2\leq \mathcal{O}\left(\beta_{\Fc_h}(t)+\beta_{\Fc_{h+1}}(t)\right)$.
\end{itemize}
Here $\pi_i^0$ stands for the greedy policy selected in round $i$, and $\Dc_h^{t-1}=\{(s_h^j,a_h^j)\}_{j\in\{0,1\}}$ is the dataset at step $h$ up to round $t$.
\end{lemma}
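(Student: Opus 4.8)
The plan is to reduce both Bellman-residual quantities to action-value discrepancies that the MLE confidence set already controls, exploiting the Bellman identity $f_h^*=\Tc_h f_{h+1}^*$ satisfied by the realizable ground truth. I work on the probability-$(1-\delta)$ event on which Lemma~\ref{lemma:version-1} holds; since that lemma and the companion estimate (\ref{eqn:mle-bound-1}) for $f^*$ are valid for every horizon, I invoke their round-$t$ versions, so that $f^*\in\Vc_1^\Fc$ and, for every $f\in\Vc_1^\Fc$ and every $h$,
\begin{align*}
\sum_{i=1}^{t-1}\underset{x\sim\pi_i^0}{\Eb}\big[(f_h(x)-\widehat{f}_h(x))^2\big]\le 4\beta_{\Fc_h}(t),\qquad \sum_{i=1}^{t-1}\underset{x\sim\pi_i^0}{\Eb}\big[(f_h^*(x)-\widehat{f}_h(x))^2\big]\le 4\beta_{\Fc_h}(t).
\end{align*}

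For part (1) I would insert the ground truth and use $f_h^*=\Tc_h f_{h+1}^*$ to split
\begin{align*}
f_h-\Tc_h f_{h+1}^t=(f_h-f_h^*)+(\Tc_h f_{h+1}^*-\Tc_h f_{h+1}^t),
\end{align*}
and bound the square by twice the sum of the two squares. The step-$h$ term is handled by the further split $(f_h-f_h^*)^2\le 2(f_h-\widehat{f}_h)^2+2(\widehat{f}_h-f_h^*)^2$; summing $\Eb_{x\sim\pi_i^0}$ over $i<t$ and using the two displays above gives $\mathcal{O}(\beta_{\Fc_h}(t))$. This step uses only membership $f\in\Vc_1^\Fc$, which is precisely why the conclusion is uniform over the confidence set rather than holding for a single selected function---the strengthening over Lemma~39 of \cite{Chi_Jin:RL_eluder:2021}.

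The step-$(h+1)$ term is where I expect the main obstacle. Writing $V_{h+1}(s')=\max_{a'}f_{h+1}(s',a')$, the inequality $|\max_{a'}f_{h+1}^*-\max_{a'}f_{h+1}^t|\le\max_{a'}|f_{h+1}^*-f_{h+1}^t|$ and Jensen give
\begin{align*}
(\Tc_h f_{h+1}^*-\Tc_h f_{h+1}^t)(x)^2\le\underset{s'\sim P_h(\cdot\mid x)}{\Eb}\Big[\max_{a'}\big(f_{h+1}^*-f_{h+1}^t\big)(s',a')^2\Big].
\end{align*}
Taking $\Eb_{x\sim\pi_i^0}$ at step $h$ turns the outer expectation into the step-$(h+1)$ state marginal of $\pi_i^0$, so the right-hand side is a genuine step-$(h+1)$ object that I want to charge to the confidence-set estimate $\sum_{i<t}\Eb_{\pi_i^0}[(f_{h+1}^t-f_{h+1}^*)^2]\le\mathcal{O}(\beta_{\Fc_{h+1}}(t))$, available because $f^t\in\Vc_1^\Fc$. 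The difficulty is the inner $\max_{a'}$: the confidence set controls the action-value gap only at the action the behavior policy $\pi_i^0$ actually executes, whereas the value gap evaluates the greedy actions of $f^*$ and of $f^t$. I plan to resolve this by sandwiching $V_{h+1}^*-V_{h+1}^t$ between the action-value gaps at the greedy actions of $f^*$ and of $f^t$, and charging each to the step-$(h+1)$ distribution of the family used in the BE dimension---$\Pi_{\Gc}$ for part (1) and $\Pi_\Delta$ for part (2)---with the $\norm{\cdot}_\infty$ bracketing underlying $\beta_{\Fc_{h+1}}$ used to move from the executed action to the greedy one up to the lower-order $1/T$ term. This is the most delicate point.

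Part (2) uses the identical decomposition with $\Eb_{x\sim\pi_i^0}$ replaced by the empirical sum over $\Dc_h^{t-1}$ (and $f_{h+1}^t$ replaced by the same $f_{h+1}$), so the step-$h$ term becomes the in-sample confidence-set bound and the step-$(h+1)$ term is evaluated on the recorded next states $s_{h+1}^j$. Finally I would union bound over $h\in[H]$ and $t\in[T]$, which is already absorbed into the extra $\log H$ factor inside $\beta_{\Fc_h}$, to reach the ``for all $h$, for all $f\in\Vc_1^\Fc$'' form of the statement.
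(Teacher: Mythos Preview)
Your decomposition via the Bellman identity $f_h^*=\Tc_h f_{h+1}^*$ and the split into a step-$h$ piece and a step-$(h{+}1)$ piece is exactly the paper's route, and your handling of the step-$h$ piece (triangle through the MLE $\widehat f_h$, then invoke the confidence-set inequality together with (\ref{eqn:mle-bound-1})) matches the paper's argument. Two places in your plan diverge from the paper.

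\textbf{Step-$(h{+}1)$ term.} The paper does not go through greedy-action sandwiching, BE-dimension distribution families, or $\norm{\cdot}_\infty$ bracketing. After Jensen it writes
\[
\big(\Tc_h f_{h+1}^*-\Tc_h f_{h+1}\big)(x)^2\;\le\;\underset{y\sim P_h(\cdot\mid x)}{\Eb}\big[(f_{h+1}^*(y)-f_{h+1}(y))^2\big],
\]
and then observes that when $x$ is drawn from $\pi_i^0$ at step $h$ and $y$ is the successor state--action under $\pi_i^0$, the distribution of $y$ is exactly the step-$(h{+}1)$ sampling distribution of $\pi_i^0$; the confidence-set bound at level $h{+}1$ then yields the $\beta_{\Fc_{h+1}}(t)$ contribution in one line, giving the final constant $6(\beta_{\Fc_h}+\beta_{\Fc_{h+1}})$. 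Your proposed detour through $\Pi_\Gc$, $\Pi_\Delta$ and bracketing is not needed for this lemma---those objects enter only later, when Lemma~\ref{lemma:dist-eluder} converts cumulative \emph{squared} Bellman error into cumulative Bellman error---and the mechanism you sketch for ``moving from the executed action to the greedy one via bracketing'' is not a sound step as stated (bracketing controls covers of the function class, not substitutions of the evaluation action).

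\textbf{Part (2).} The paper does not repeat the decomposition on the empirical sum. It derives (2) from (1) in a single step via the Freedman-type martingale inequality of Lemma~\ref{lemma:supp-1}, which transfers the bound on $\sum_{i}\Eb_{x\sim\pi_i^0}[\,\cdot\,]$ to the realized sum $\sum_{x\in\Dc_h^{t-1}}[\,\cdot\,]$ up to an additive $\log(1/\delta)$ term already absorbed in $\beta$. This is shorter than your plan and avoids rerunning the step-$(h{+}1)$ argument on the recorded next states.
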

\begin{proof}
We prove the second inequality first. Note that
\begin{align*}
&\sum_{i=1}^{t-1}\underset{x\sim\pi_i^0}{\Eb}\left[(f_h(x)-(\Tc f_{h+1})(x))^2\right] \\
&=\sum_{i=1}^{t-1}\underset{x\sim\pi_i^0}{\Eb}\left[(f_h(x)-(\Tc_hf_{h+1}^*)(x))^2\right]+\sum_{i=1}^{t-1}\underset{x\sim\pi_i^0}{\Eb}\left[((\Tc_h f_{h+1}^*)(x)-(\Tc_h f_{h+1})(x))^2\right] \\
&\qquad +\sum_{i=1}^{t-1}\underset{x\sim\pi_i^0}{\Eb}\left[(f_h(x)-(\Tc_h f_{h+1}^*)(x))((\Tc_h f_{h+1})(x)-(\Tc_h f_{h+1})(x))\right] \\
&=\frac{3}{2}\sum_{i=1}^{t-1}\underset{x\sim\pi_i^0}{\Eb}\left[(f_h(x)-(\Tc_hf_{h+1}^*)(x))^2\right]+\frac{3}{2}\sum_{i=1}^{t-1}\underset{x\sim\pi_i^0}{\Eb}\left[((\Tc_h f_{h+1}^*)(x)-(\Tc_h f_{h+1})(x))^2\right] \\
&\leq \frac{3}{2}\sum_{i=1}^{t-1}\underset{x\sim\pi_i^0}{\Eb}\left[(f_h(x)-f_{h}^*(x))^2\right]+\frac{3}{2}\sum_{i=1}^{t-1}\underset{x\sim\pi_i^0}{\Eb}\left[((\Tc_h f_{h+1}^*)(x)-(\Tc_h f_{h+1})(x))^2\right] \\
&\leq \frac{3}{2}\sum_{i=1}^{t-1}\underset{x\sim\pi_i^0}{\Eb}\left[(f_h(x)-f_{h}^*(x))^2\right]+\frac{3}{2}\sum_{i=1}^{t-1}\underset{x\sim\pi_i^0,y\sim P_h(\cdot|x)}{\Eb}\left[(f_{h+1}^*(y)-f_{h+1}(y))^2\right] \\
&\leq 6(\beta_{\Fc_h}(t)+\beta_{\Fc_{h+1}}(t))
,
\end{align*}
where the second equality follows from $\Eb[ab]\leq \frac{1}{2}(\Eb[a^2]+\Eb[b^2])$, and last inequality follows from (\ref{eqn:mle-bound-1})) and the fact that $\pi_i^0$ is the policy adopted for sampling. 

The second inequality can be proved based on (1) by Lemma~\ref{lemma:supp-1}.
\end{proof}

Despite the inequalities resemble the concentration inequalities provided in~\cite{Chi_Jin:RL_eluder:2021} (Lemma~39), they are derived directly from the constructed confidence set. Further, the guarantee established in~\cite{Chi_Jin:RL_eluder:2021} only holds for certain function approximation, while our result holds for all function approximation within the confidence set.

\section{Proof of Theorem~\ref{thm:main}}\label{app:thm_proof}
In this section, we assume $\Fc$ is the function class after completion.

Our proof proceeds in three steps. We start with simplifying the target regret term. Then, we show bound the simplified term by Bellman error. Finally, We use distributional eluder dimension to bound the cumulative Bellman error.

{\bf Step I: Simplifying the regret term.} Recall that $\pi_1^t=\pi_0^{t-1}$ for all $t\in[T]$, therefore it holds that
\begin{align*}
\mathrm{Regret}(T)&=\sum_{t=1}^T\left(2V_1^*-V_1^{\pi_0^t}-V_1^{\pi_1^t}\right)(s_1) \\
&=\sum_{t=1}^T\left(2V_1^*-V_1^{\pi_0^t}-V_1^{\pi_0^{t-1}}\right)(s_1) \\
&\leq 2\sum_{t=0}^T\left(V_1^*-V_1^{\pi_0^t}\right)(s_1).
\end{align*}
Here the value function is defined for the true underlying MDP. For ease of exposition, we define $V_{1;f}^\pi$ as the value function under policy $\pi$ with respect to a model $f\in\Fc$. Then, the regret can be expressed as
\begin{align*}
\mathrm{Regret}(T)&\leq 2\sum_{t=0}^T\underset{f^*}{\Eb}\left(V_{1;f^*}^*-V_{1;f^*}^{\pi_0^t}\right)(s_1).
\end{align*}

Denote $\mathcal{H}^{t}:=\{(\tau_0^s,\tau_1^s,o^s)\}_{s=1}^t$ as the history up to round $t$. Recall the posterior $p(f|\Dc^t)$ is the product of the prior and the likelihood, thus $f^t$ and $f^*$ are identically distributed given history $\Hc^{t-1}$. Based on the above observation and the fact that $\pi_0^t$ and $\pi^*$ are optimal policies of $f^t$ and $f^*$, we conclude that $V_{1;f^t}^{\pi_0^t}$ and $V_{1;f^*}^{\pi^*}$ are identically distributed given history $\Hc^{t-1}$, and we have $\Eb_{\Hc^{t-1}}\Eb_{f^*}[V_{1;f^t}^{\pi_0^t}|\Hc^{t-1}]=\Eb_{\Hc^{t-1}}\Eb_{f^*}[V_{1;f^*}^{\pi^*}|\Hc^{t-1}]$. Therefore, the regret can be further expressed as
\begin{align*}
\mathrm{Regret}(T)&\leq 2\sum_{t=0}^T\underset{f^*}{\Eb}\left(V_{1;f^*}^*-V_{1;f^*}^{\pi_0^t}\right)(s_1) \\
&=2\sum_{t=0}^T\underset{\Hc^{t-1}}{\Eb}\left[\underset{f^*}{\Eb}\left(V_{1;f^*}^*-V_{1;f^*}^{\pi_0^t}\right)(s_1)\middle|\Hc^{t-1}\right] \\
&=2\sum_{t=0}^T\underset{\Hc^{t-1}}{\Eb}\left[\underset{f^*}{\Eb}\left(V_{1;f^t}^{\pi_0^t}-V_{1;f^*}^{\pi_0^t}\right)(s_1)\middle|\Hc^{t-1}\right] \\
&=2\sum_{t=0}^T\underset{f^*}{\Eb}\left(V_{1;f^t}^{\pi_0^t}-V_{1;f^*}^{\pi_0^t}\right)(s_1).
\end{align*}

{\bf Step II: Bounding the regret by Bellman error.} By standard policy loss decomposition (Lemma~\ref{lemma:loss-decomp}), we have
\begin{align*}
\mathrm{Regret}(T)&\leq 2\sum_{t=0}^T\underset{f^*}{\Eb}\left(V_{1;f^t}^{\pi_0^t}-V_{1;f^*}^{\pi_0^t}\right)(s_1)  \\
&=2\sum_{t=0}^T\sum_{h=1}^H\Ec(f^t,\pi_0^t,h).
\end{align*}

{\bf Step III: Bounding cumulative Bellman error using DE dimension.} We then focus on a fixed step $h$ and bound the cumulative Bellman error $\sum_{t=1}^T\Ec(f^t,\pi_0^t,h)$ using Lemma~\ref{lemma:concentration-3} and Lemma~\ref{lemma:dist-eluder}.

Invoking Lemma~\ref{lemma:concentration-3}(1) and Lemma~\ref{lemma:dist-eluder} with 
\begin{align*}
\begin{cases}
&\rho=\frac{1}{T},\omega=\sqrt{\frac{1}{T}},C=1 \\
&\mathcal{X}=\mathcal{S}\times\mathcal{A}, \Phi=\Fc_h-\Tc_h\Fc_{h+1}, \Pi=\Pi_{\Fc,h}, \\
&\phi_t=f_h^t-\Tc_hf_{h+1}^t, \mbox{ and } \mu_t=\Pb^{\pi_0^t}(s_h=\cdot,a_h=\cdot),
\end{cases}
\end{align*}
gives
\begin{align}
\sum_{i=1}^{t}\Ec(f^t,\pi_0^t,h)\leq \mathcal{O}\left(\sqrt{t\cdot\dim_{\mathrm{DE}}(\Fc_h-\Tc_h\Fc_{h+1},\Pi_{\Fc,h},\sqrt{1/K})\cdot\iota_h^t}\right), \label{eqn:result-1}
\end{align}
where $\iota_h^t=\beta_{\Fc_h}(t)+\beta_{\Fc_{h+1}}(t)$ and $\beta_{\Fc_h}(t)=98\kappa^2\log(2HN_{[]}(\bar{\kappa}(2t)^{-1},\Fc_h,\norm{\cdot}_\infty)/\delta)$.

Let $\beta_\Fc=\sup_{h\in[H]}\beta_{\Fc_h}(T)$ and recall the definition of Bellman eluder dimension $\dim_{\mathrm{BE}}(\Fc,\Pi_\Fc,\sqrt{1/K})=\sup_{h\in[H]}\dim_{\mathrm{DE}}(\Fc_h-\Tc_h\Fc_{h+1},\Pi_{\Fc,h},\sqrt{1/K})$, we have
\begin{align*}
\sum_{i=1}^{t}\Ec(f^t,\pi_0^t,h)\leq \mathcal{O}\left(\sqrt{t\cdot\dim_{\mathrm{BE}}(\Fc,\Pi_\Fc,\sqrt{1/K})\cdot\beta_\Fc)}\right)
\end{align*}

Alternatively, invoking Lemma~\ref{lemma:concentration-3}(1) and Lemma~\ref{lemma:dist-eluder} with 
\begin{align*}
\begin{cases}
&\rho=\frac{1}{T},\omega=\sqrt{\frac{1}{T}},C=1 \\
&\mathcal{X}=\mathcal{S}\times\mathcal{A}, \Phi=\Fc_h-\Tc_h\Fc_{h+1}, \Pi=\Pi_{\Delta,h}, \\
&\phi_t=f_h^t-\Tc_hf_{h+1}^t, \mbox{ and } \mu_t=\mathbf{1}\{\cdot=(s_h^i,a_h^i)\},
\end{cases}
\end{align*}
(with abuse of notation $(s_h^i,a_h^i)$ consists of $2i$ trajectories up to round $i$) gives
\begin{align}
\sum_{i=1}^{t}\Ec(f^t,\pi_0^t,h)\leq \mathcal{O}\left(\sqrt{t\cdot\dim_{\mathrm{DE}}(\Fc_h-\Tc_h\Fc_{h+1},\Pi_{\Fc,h},\sqrt{1/K})\cdot\iota_h^t}\right), \label{eqn:result-2}
\end{align}
where $\iota_h^t=\beta_{\Fc_h}(t)+\beta_{\Fc_{h+1}}(t)$ and $\beta_{\Fc_h}(t)=98\kappa^2\log(2HN_{[]}(\bar{\kappa}(2t)^{-1},\Fc_h,\norm{\cdot}_\infty)/\delta)$.

Let $\beta_\Fc=\sup_{h\in[H]}\beta_{\Fc_h}(T)$ and recall the definition of Bellman eluder dimension $\dim_{\mathrm{BE}}(\Fc,\Pi_\Fc,\sqrt{1/K})=\sup_{h\in[H]}\dim_{\mathrm{DE}}(\Fc_h-\Tc_h\Fc_{h+1},\Pi_{\Fc,h},\sqrt{1/K})$, we have
\begin{align*}
\sum_{i=1}^{t}\Ec(f^t,\pi_0^t,h)\leq \mathcal{O}\left(\sqrt{t\cdot\dim_{\mathrm{BE}}(\Fc,\Pi_\Fc,\sqrt{1/K})\cdot\beta_\Fc)}\right)
\end{align*}

Summing over horizon $h\in[H]$ completes the proof.

\section{Technical lemmas}
\begin{lemma}[\cite{YinglunZhu:RL:NIPS:2022}]\label{lemma:supp-1}
Let $\{X_t\}_{t\leq T}$ be a sequence of positive valued random variables adapted to filtration $\Fc_t$, and let $\Eb_t[\cdot]=\Eb[\cdot|\Fc_{t-1}]$. If $X_t\leq B$ almost surely, then with probability at least $1-\delta$, the following holds:
\begin{align*}
&\sum_{t=1}^T X_t\leq \frac{3}{2}\sum_{t=1}^T\Eb_t[X_t]+4B\log(1/\delta), \\
&\sum_{t=1}^T\Eb_t[X_t]\leq 2\sum_{t=1}^T X_t+8B\log(1/\delta).
\end{align*}
\end{lemma}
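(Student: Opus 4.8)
The plan is to prove both inequalities as instances of a multiplicative Freedman-type bound, each obtained by constructing a nonnegative supermartingale with initial value $1$ and applying Markov's inequality to its terminal value. For any process satisfying $\Eb[M_t\mid\Fc_{t-1}]\leq M_{t-1}$ and $M_0=1$, the tower rule gives $\Eb[M_T]\leq 1$, hence $\Pb(M_T\geq 1/\delta)\leq \delta\,\Eb[M_T]\leq\delta$; on the complementary event (probability at least $1-\delta$) one reads off the stated bound after taking logarithms. The structural fact I will exploit repeatedly is that $\Eb_t[X_t]=\Eb[X_t\mid\Fc_{t-1}]$ is $\Fc_{t-1}$-measurable, which makes the partial sums $\sum_{s\leq t}\Eb_s[X_s]$ predictable and lets them be absorbed into the supermartingale exponent.

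For the first inequality I would fix $\lambda>0$ and use the chord bound $e^{\lambda x}\leq 1+\tfrac{x}{B}(e^{\lambda B}-1)$, valid on $[0,B]$ by convexity. Taking $\Eb_t[\cdot]$ and using $1+u\leq e^u$ yields $\Eb_t[e^{\lambda X_t}]\leq\exp\big(\tfrac{e^{\lambda B}-1}{B}\Eb_t[X_t]\big)$, so that
\[
M_t=\exp\Big(\lambda\sum_{s=1}^t X_s-\tfrac{e^{\lambda B}-1}{B}\sum_{s=1}^t\Eb_s[X_s]\Big)
\]
is a supermartingale. Markov's inequality then gives, with probability at least $1-\delta$, the bound $\sum_{s=1}^T X_s\leq \tfrac{e^{\lambda B}-1}{\lambda B}\sum_{s=1}^T\Eb_s[X_s]+\tfrac{\log(1/\delta)}{\lambda}$. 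Choosing $\lambda=1/(2B)$ makes the multiplicative factor $2(e^{1/2}-1)\leq 3/2$ and the additive factor $2B\log(1/\delta)\leq 4B\log(1/\delta)$, which is the claim with slack.

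The second inequality is the matching lower-tail statement from the same template applied to $-X_t$. I would use $e^{-\lambda x}\leq 1-\lambda x+\tfrac{\lambda^2 x^2}{2}$ together with $x^2\leq Bx$ on $[0,B]$ to obtain $\Eb_t[e^{-\lambda X_t}]\leq\exp\big(-(\lambda-\tfrac{\lambda^2 B}{2})\Eb_t[X_t]\big)$, so that $\exp\big(-\lambda\sum_{s\leq t}X_s+(\lambda-\tfrac{\lambda^2 B}{2})\sum_{s\leq t}\Eb_s[X_s]\big)$ is a supermartingale. Markov and rearranging give $\sum_{s=1}^T\Eb_s[X_s]\leq \tfrac{1}{1-\lambda B/2}\sum_{s=1}^T X_s+\tfrac{\log(1/\delta)}{\lambda(1-\lambda B/2)}$, and taking $\lambda=1/B$ produces the constants $2$ and $2B\log(1/\delta)\leq 8B\log(1/\delta)$.

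The routine parts are the two scalar inequalities and the $\lambda$-optimization; the only point needing genuine care is the supermartingale verification, namely checking that $M_{t-1}$ and $\Eb_t[X_t]$ are $\Fc_{t-1}$-measurable so that $\Eb[M_t\mid\Fc_{t-1}]=M_{t-1}\,e^{-c\,\Eb_t[X_t]}\,\Eb_t[e^{\pm\lambda X_t}]\leq M_{t-1}$, where $c$ is the respective exponent coefficient. Since the lemma is quoted from \cite{YinglunZhu:RL:NIPS:2022}, the substantive work is merely bookkeeping the constants so that the two bounds land exactly at $(3/2,4)$ and $(2,8)$; the slack computed above shows any fixed $\lambda$ of order $1/B$ suffices, so there is no real obstacle beyond this accounting.
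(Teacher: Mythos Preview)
The paper does not prove this lemma at all; it is listed among the technical lemmas with only a citation to \cite{YinglunZhu:RL:NIPS:2022} and is invoked as a black box (in the proof of Lemma~\ref{lemma:concentration-2}). Your exponential-supermartingale argument is the standard derivation of such multiplicative Freedman-type bounds and is correct: the chord bound $e^{\lambda x}\leq 1+\tfrac{x}{B}(e^{\lambda B}-1)$ for the upper tail and the second-order bound $e^{-\lambda x}\leq 1-\lambda x+\tfrac{\lambda^2 x^2}{2}$ together with $x^2\leq Bx$ for the lower tail each yield valid supermartingales, and your choices $\lambda=1/(2B)$ and $\lambda=1/B$ give constants $(3/2,2B)$ and $(2,2B)$, well inside the stated $(3/2,4B)$ and $(2,8B)$. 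The only bookkeeping you do not spell out is that the lemma asserts both inequalities on a \emph{single} event of probability $1-\delta$; a union bound over the two supermartingales costs a factor $\log(2/\delta)$ in place of $\log(1/\delta)$, which the slack in the stated constants absorbs. Since the paper offers no argument to compare against, your self-contained proof is simply more than what the paper provides.
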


\begin{lemma}[Policy loss decomposition (Lemma 1 in~\cite{NanJiang:bellman-rank:2017})]\label{lemma:loss-decomp}
Let $f\in\Fc$ be a function and $\pi_f$ be the associated greedy policy. $V_{1;f}(s_1)=\Eb[f(s_1,\pi_f(s_1))]$, then it holds that
\begin{align*}
(V_{1;f}^{\pi_f}-V_{1;f^*}^{\pi_f})(s_1)=\sum_{h=1}^H\Ec(f,\pi_f,h),
\end{align*}
where $\Ec(f,\pi_f,h)={\Eb}_{(s_h,a_h)\sim\pi_f}[(f_h-\Tc_hf_{h+1})(s_h,a_h)]$.
\end{lemma}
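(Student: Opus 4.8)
The plan is to prove this identity by the standard telescoping (performance-difference) argument, the only structural input being that $\pi_f$ is greedy with respect to $f$. Concretely, greediness means $f_{h+1}(s',\pi_f(s'))=\max_{a'}f_{h+1}(s',a')$ for every $s'$, so that along any trajectory the Bellman operator reads $(\Tc_h f_{h+1})(s,a)=r_h(s,a)+\Eb_{s'\sim P_h(\cdot\mid s,a)}[f_{h+1}(s',\pi_f(s'))]$. This is the identity that lets the $\max$ inside $\Tc_h$ line up with the action actually chosen by $\pi_f$.

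First I would introduce the per-step discrepancy $\delta_h:=\Eb_{(s_h,a_h)\sim\pi_f}[(f_h-Q_{h;f^*}^{\pi_f})(s_h,a_h)]$, where the trajectory is generated by executing $\pi_f$ in the true MDP (so $a_h=\pi_f(s_h)$ and $s_{h+1}\sim P_h(\cdot\mid s_h,a_h)$) and $Q_{h;f^*}^{\pi_f}$ denotes the true action-value of $\pi_f$, which obeys the Bellman consistency $Q_{h;f^*}^{\pi_f}(s,a)=r_h(s,a)+\Eb_{s'}[Q_{h+1;f^*}^{\pi_f}(s',\pi_f(s'))]$ because $f^*=Q^*$.

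Second I would establish the one-step recursion. Splitting $f_h-Q_{h;f^*}^{\pi_f}=(f_h-\Tc_h f_{h+1})+(\Tc_h f_{h+1}-Q_{h;f^*}^{\pi_f})$ and substituting the two expressions above, the reward terms $r_h$ cancel, leaving $(\Tc_h f_{h+1}-Q_{h;f^*}^{\pi_f})(s,a)=\Eb_{s'}[(f_{h+1}-Q_{h+1;f^*}^{\pi_f})(s',\pi_f(s'))]$. Taking the expectation over $(s_h,a_h)\sim\pi_f$ and folding the next-state expectation into the step-$(h+1)$ occupancy yields exactly $\delta_h=\Ec(f,\pi_f,h)+\delta_{h+1}$.

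Finally I would telescope from $h=1$ to $H$: the boundary term $\delta_{H+1}$ vanishes since the episode terminates at the fixed state $s_{H+1}$, while $\delta_1=f_1(s_1,\pi_f(s_1))-Q_{1;f^*}^{\pi_f}(s_1,\pi_f(s_1))=(V_{1;f}^{\pi_f}-V_{1;f^*}^{\pi_f})(s_1)$ because $s_1$ is deterministic and $a_1=\pi_f(s_1)$. Summing the recursion gives the claim. There is no genuine analytic obstacle here --- the statement is purely algebraic --- so the one thing to watch is the bookkeeping: keeping the value \emph{claimed} by $f$, namely $V_{1;f}^{\pi_f}(s_1)=f_1(s_1,\pi_f(s_1))$, cleanly separated from the \emph{true} value $V_{1;f^*}^{\pi_f}(s_1)$ of running $\pi_f$, and applying the greedy identity at every step so that no spurious mismatch between the $\max$ in $\Tc_h$ and the $\pi_f$-action is introduced.
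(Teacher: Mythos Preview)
Your telescoping argument is correct and is exactly the standard proof of this identity; the paper does not give its own proof but simply cites Lemma~1 of \cite{NanJiang:bellman-rank:2017}, whose argument is the same one you wrote. One harmless quibble: the Bellman consistency $Q_{h;f^*}^{\pi_f}(s,a)=r_h(s,a)+\Eb_{s'}[Q_{h+1;f^*}^{\pi_f}(s',\pi_f(s'))]$ holds for any policy in the true MDP by definition of the policy action-value, so the clause ``because $f^*=Q^*$'' is unnecessary.
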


\begin{lemma}[Lemma 41 in~\cite{Chi_Jin:RL_eluder:2021}]\label{lemma:dist-eluder}
Given a function class $\Phi$ defined on $\mathcal{X}$ with $|\phi(x)|\leq C$ for all $(g,x)\in\Phi\times\mathcal{X}$, and a family of probability measure $\Pi$ over $\mathcal{X}$. Suppose sequence $\{\phi_k\}_{k=1}^K\subseteq \Phi$ and $\{\mu_k\}_{k=1}^K\subseteq \Pi$ satisfy that for all $k\in[K]$, $\sum_{t=1}^{k-1}(\Eb_{\mu_t}[\phi_k])^2\leq \beta$. Then for all $k\in[K]$ and $\omega >0$,
\begin{align*}
\sum_{t=1}^k|\Eb_{\mu_t}[\phi_t]|\leq \mathcal{O}\left(\sqrt{\dim_{\mathrm{DE}}(\Phi,\Pi,\omega)\beta k}+\min\{k,\dim_{\mathrm{DE}}(\Phi,\Pi,\omega)\}C+k\omega\right).
\end{align*}
\end{lemma}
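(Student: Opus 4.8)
The plan is to reproduce the classical eluder-dimension pigeonhole argument of Russo--Van Roy in the distributional form used in~\cite{Chi_Jin:RL_eluder:2021}. Write $a_t = |\Eb_{\mu_t}[\phi_t]|$ and $d = \dim_{\mathrm{DE}}(\Phi,\Pi,\omega)$, and recall $a_t \le C$ for all $t$. The proof separates into a combinatorial threshold-counting step and an elementary summation step, and the whole subtlety lives in the former.

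First I would establish the counting claim: for any threshold $\alpha \ge \omega$, the number of rounds $t\in[k]$ with $a_t > \alpha$ is at most $(\beta/\alpha^2 + 1)d$. The mechanism rests on reading Definition~\ref{def:DE-dim} through the witness $\phi_t$. Since $|\Eb_{\mu_t}[\phi_t]| > \alpha$, whenever $\mu_t$ is declared $\alpha$-\emph{dependent} on a block $B$ of predecessors the defining implication must fail on $\phi_t$, which forces $\sum_{s\in B}(\Eb_{\mu_s}[\phi_t])^2 > \alpha^2$. Summing over $J$ disjoint dependent blocks and using the hypothesis $\sum_{s<t}(\Eb_{\mu_s}[\phi_t])^2 \le \beta$ yields $J < \beta/\alpha^2$. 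I would then process the large rounds in increasing order, greedily appending each $\mu_t$ to an existing block on which it is $\alpha$-independent and opening a new block only when $\mu_t$ is dependent on every current block. By Definition~\ref{def:DE-dim} (taking the free radius $\epsilon' = \alpha \ge \omega$) each block is an admissible $\alpha$-independent sequence and hence has length at most $d$, while the disjointness-plus-budget bound above caps the number of blocks at $\beta/\alpha^2 + 1$; multiplying gives the claim.

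Next I would convert these threshold bounds into the stated cumulative bound by sorting. Let $a_{(1)} \ge a_{(2)} \ge \cdots \ge a_{(k)}$ be the non-increasing rearrangement. Applying the counting claim with $\alpha = a_{(n)}$ (for those $n$ with $a_{(n)} > \omega$) gives $n \le (\beta/a_{(n)}^2 + 1)d$, hence $a_{(n)} \le \sqrt{\beta d/(n-d)}$ whenever $n > d$. I then split $\sum_t a_t = \sum_n a_{(n)}$ into three groups: indices with $a_{(n)} \le \omega$, the at most $\min\{k,d\}$ indices $n \le d$ with $a_{(n)} > \omega$, and the remaining large-index terms. The first group contributes at most $k\omega$, the second at most $\min\{k,d\}\,C$ via $a_t \le C$, and the third is bounded by $\sqrt{\beta d}\sum_{m=1}^{k} m^{-1/2} \le 2\sqrt{\beta d k}$ using $\sum_{m=1}^{k} m^{-1/2} \le 2\sqrt{k}$. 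Adding the three pieces recovers exactly $\mathcal{O}(\sqrt{d\beta k} + \min\{k,d\}C + k\omega)$.

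The main obstacle is the threshold-counting step, and specifically translating the $\alpha$-dependence condition---a universal statement over all $f\in\Phi$---into the usable quadratic lower bound by instantiating $f=\phi_t$ as the independence witness, and then checking that the greedy block assignment simultaneously keeps each block a valid $\alpha$-independent sequence (so the eluder-dimension length bound of $d$ applies) and limits the block count through the budget $\beta$. The summation step is routine once the counting claim is secured, and the only bookkeeping care needed there is the clean handling of the boundary between indices at most $d$ and indices exceeding $d$.
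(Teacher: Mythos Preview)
Your proposal is correct and follows the standard Russo--Van Roy/Jin et al.\ argument. Note, however, that the paper does not actually prove this lemma: it merely states it as Lemma~41 of~\cite{Chi_Jin:RL_eluder:2021} in the auxiliary technical-lemmas appendix without any accompanying argument, so there is no in-paper proof to compare against. What you have written is essentially the proof from the cited reference, with the threshold-counting pigeonhole step and the sorted-summation step both handled correctly.
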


\section{Simulation results}
In this section, we evaluate the model-free Thompson algorithm for online RLHF described by Algorithm~\ref{alg:main}. We introduce the basic setup of the simulation and highlight the practical design below.

We consider a randomly generated MDP of size $k=15$ ($|\mathcal{S}|\times|\mathcal{A}|$) and the distribution of $Q(s,a)$ for any $(s,a)$  follows a normal distribution. Instead of using maximum likelihood estimator as in line 3 of Algorithm~\ref{alg:main}, we adopt the popular variational inference method, which maximizes the evidence lower bound (ELBO), to update the posterior and policy. Specifically, we maximize the ELBO objective $\Eb_{q}[\log p(\mathrm{feedback|Q})]-\mathrm{KL}(q(Q)||p(Q))$ in each round and update the distribution's parameters~\cite{2019arXiv190602691K}. We conduct mini-batch learning with batch size $n$ rather than online batch (with batch size 1) as employed in Algorithm~\ref{alg:main} to accelerate and stablize the learning process.. In other words, we collect $n$ trajectory comparison data ($2n$ trajectories in total) for each sampled $Q$ value. To approximate the ELBO objective, we obtain $m$ new samples of $Q$ values during each iteration. Here the samples generated in the history are helpful and ideally ELBO objective is estimated based on all samples collected so far. Considering the computational efficiency, instead we use the latest $\ell$ samples estimate to approximate the ELBO objective value, and use all samples in the first few iterations since there are insufficient samples. Finally, we maximize the approximate ELBO objective based on which we update the $Q$ function and corresponding optimal policy. To reduce noise and fluctuations in data, we apply a smoothing window of size $\omega$ in the figures shown below, which essentially calculate the average within a window of certain size.  In the experiment, $k=15$, $n=5$, $m=20$, $\ell=50$ and $\omega=20$.

Based on the practical design introduced above, we implement the Thompson sampling algorithm for RLHF and obtain the numerical results in~Fig.~\ref{fig:sim}. In the first figure, we evaluate per-step regret, which evaluates the performance gap between the current policy and the optimal policy. We see that the per-step regret converges to zero, which indicates the policy gradually becomes optimal. The second figure demonstrates the average regret over iterations, where the average regret at iteration $t$ is defined as $\mathrm{Regret}(t)/t$. We observe that the average regret approaches 0 as the number of iterations increases but the rate is slower than the per-step regret. Further, the third figure shows that the value of the learned function converges as the number of iterations increases. In fact, the value of our learned function converges to approximately 65, which is the value under the underlying MDP randomly generated in the beginning. This asymptotic phenomenon illustrates the the effectiveness of Thomson sampling for online RLHF and coincides with our theoretical result. Further, we notice that the converged value of the $Q$ parameters (mean and standard deviation of the normal distribution) is different from the underlying model, but they generate similar optimal policy and achieve the same optimal value.

\if{0}
\begin{figure}[b]
\centering
\includegraphics[width=\linewidth]{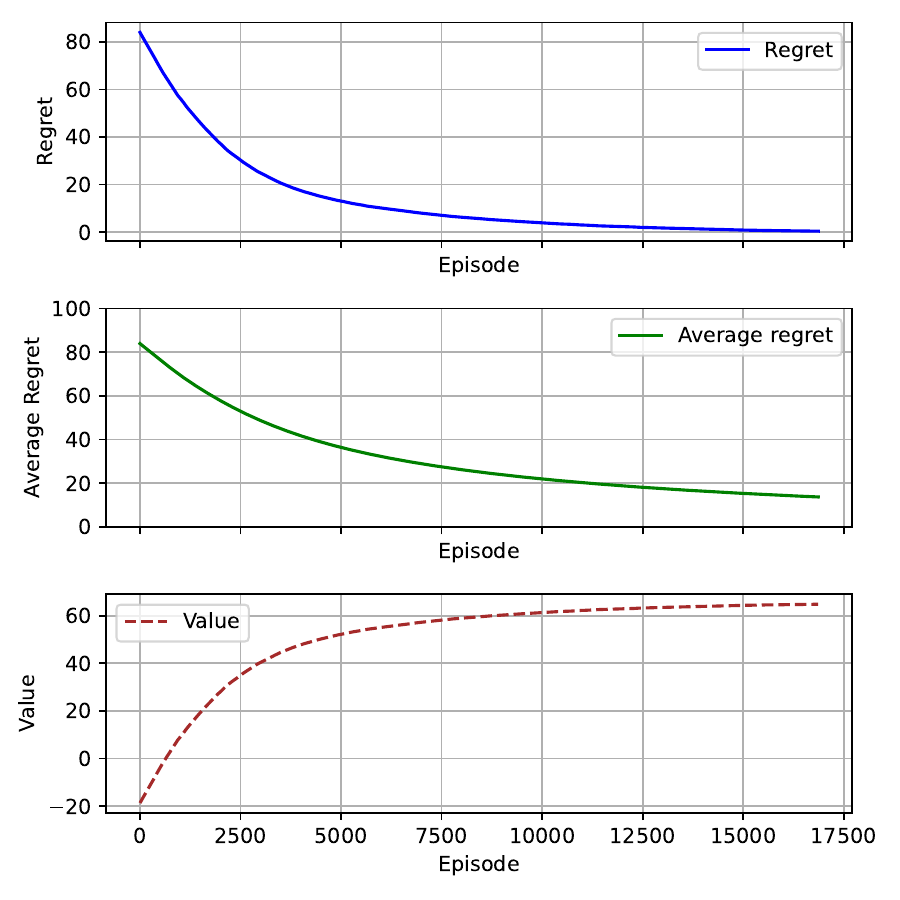}
\caption{Average regret and value of the function over iterations.}
\label{fig:sim}
\end{figure}
\fi


\begin{figure}[htpb]
\centering
\includegraphics[width=\linewidth]{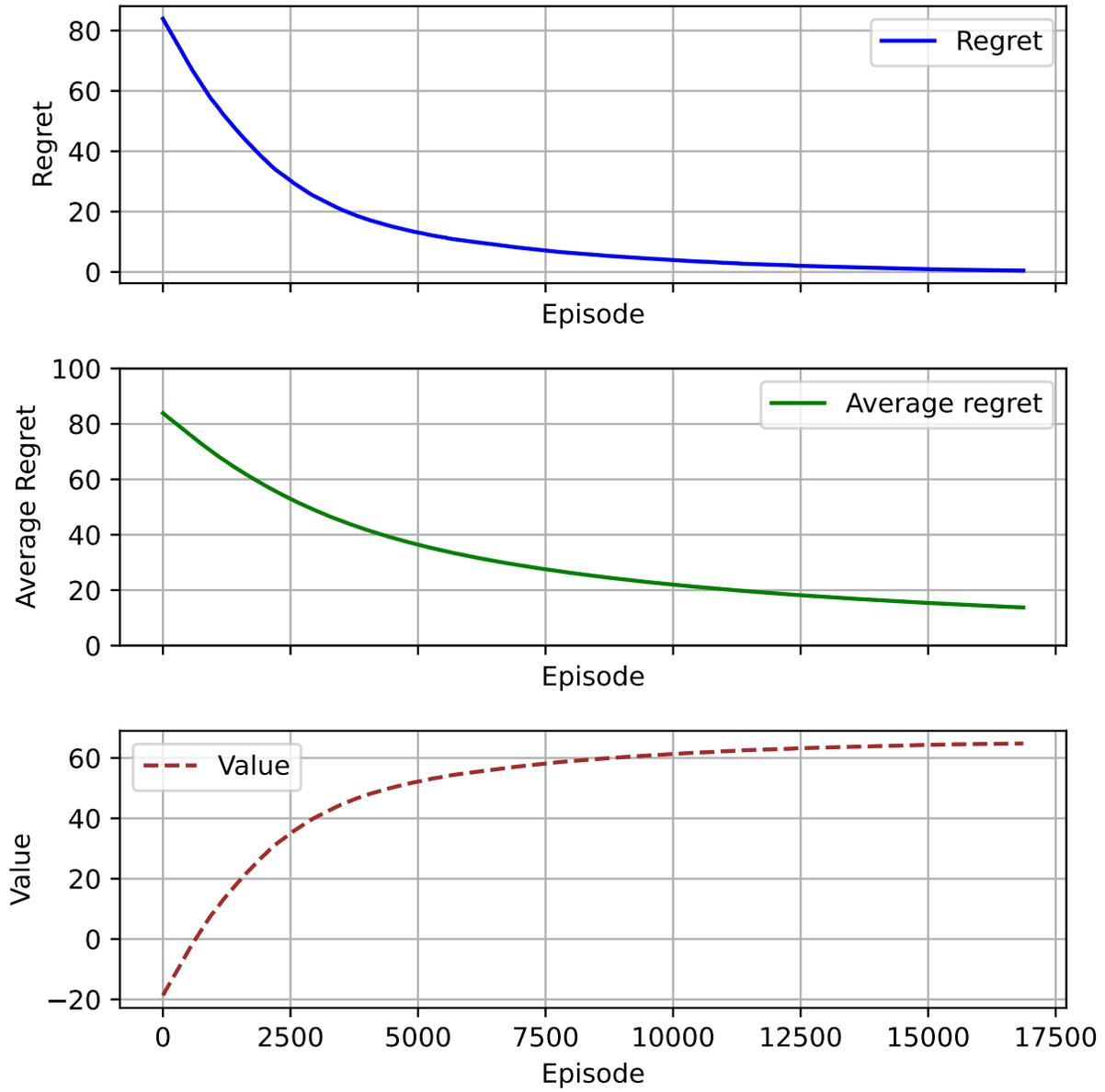}
\caption{Average regret and value of the function over iterations.}
\label{fig:sim}
\end{figure}

\newpage
\bibliography{ref}
\bibliographystyle{plain}

\if{0}
\newpage

\appendix

\fi

\end{document}